\definecolor{Maroon}{RGB}{204, 102, 0}
\definecolor{rulecolor}{rgb}{0.0, 0.06, 0.54}
\definecolor{tableheadcolor}{rgb}{0.74, 0.83, 0.9}
\definecolor{bluecolor}{rgb}{0.74, 0.83, 0.9}
\renewcommand{\cite}[1]{\citep{#1}}
\newcommand{\undtil}[1]{\underaccent{\tilde}#1}
\def\EE{\mathbb{E}}
\newtheorem{theorem}{Theorem}
\newtheorem{definition}{Definition}
\newtheorem{lemma}{Lemma}
\newtheorem{corollary}{Corollary}
\newtheorem{observation}{Observation}
\newtheorem{remark}{Remark}
\newcounter{example}[section]
\newenvironment{example}[1][]{\refstepcounter{example}\par\medskip
   \noindent \textbf{Example~\theexample. #1} \rmfamily}{\medskip}
\title{Graph Reparameterizations for Enabling 1000+ Monte Carlo Iterations \\ in Bayesian Deep Neural Networks}
\author[1,2]{Jurijs Nazarovs}
\author[3,2]{Ronak R. Mehta}
\author[3,2]{Vishnu Suresh Lokhande}
\author[2,3]{Vikas Singh}
\affil[1]{%
    Department of Statistics, 
    University of Wisconsin Madison
}
\affil[2]{%
    Department of Biostatistics \& Med. Info., 
    University of Wisconsin Madison
}
\affil[3]{
     Department of Computer Science,
     University of Wisconsin Madison
}
\begin{document}

\maketitle

\begin{abstract}
    Uncertainty estimation in deep models 
    is essential in many real-world applications and has benefited 
    from developments over the last several years. 
    Recent evidence \cite{farquhar2019radial} suggests that existing solutions dependent on simple Gaussian formulations may not be sufficient.
    However, moving to other distributions necessitates Monte Carlo (MC) sampling to estimate quantities such as the $KL$ divergence: it could be expensive and scales poorly as the dimensions of both the input data and the model grow. 
    This is directly related to the structure of the computation graph, which can grow linearly as a function of the number of MC samples needed.
    Here, we construct a framework to describe these computation graphs, and identify probability families where the graph size can be \textbf{independent} or only weakly dependent on the number of MC samples.
    These families correspond directly to large classes of distributions.
    Empirically, we can run a much larger number of iterations for MC approximations for larger architectures used in 
    computer vision
    with gains in performance measured in confident accuracy, stability of training, memory and training time.
\end{abstract}

\begin{figure}[!t]
\centering
\includegraphics[width=0.7\linewidth]{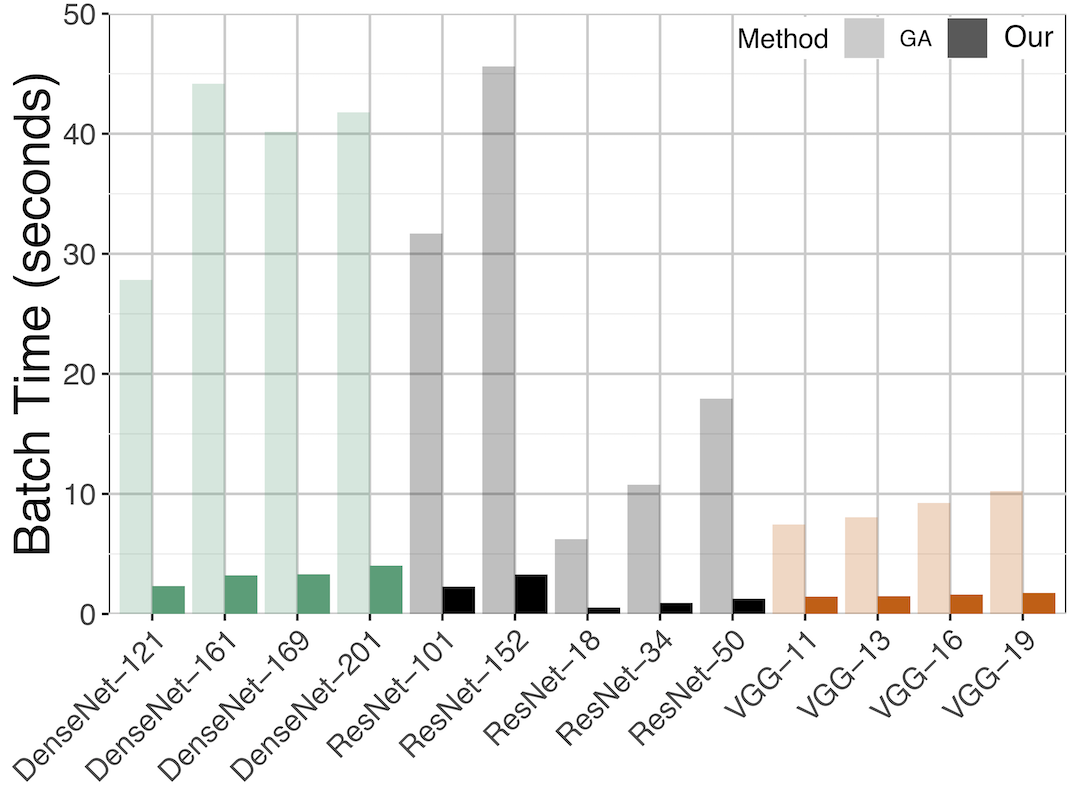}
\vspace{-10pt}
\caption{\footnotesize MC sampling is significantly slower in existing neural network libraries incorporating Gradient Accumulation (GA). In contrast, our proposed MC reparameterization reduces the compute time up to $14\times$ for some networks. }
\label{fig:newfig2}
\end{figure}

\section{Introduction}
\label{sec:intro}

Motivated by the need to provide measures of uncertainty in the deployment of deep neural networks in mission critical and medical applications, 
there has been a strong recent
interest in deep Bayesian learning.
While deep Bayesian learning provides many methods to estimate posterior distributions,
Variational Inference (VI) is a convenient choice for many problem settings \cite{blundell2015weight}. Many libraries such as Tensorflow Probability \cite{dillon2017tensorflow} are also 
now available that offer a rich set of features.

Denote the observed data as $(x, y)$, where $x$ is an input to the network, and $y$ is a corresponding response (in autoencoder settings we may have $y=x$). 
When using VI in Bayesian Neural Networks (BNNs), one considers all weights $W=(W^1, \ldots, W^D)$ as a random vector
and approximates the true unknown posterior distribution $P(W|y, x)$ with an \textit{approximate posterior} distribution $Q_\theta$ of \textit{our choice}, which depends on learned parameters $\theta$. Let $W_\theta=(W_\theta^{1}, \ldots, W_\theta^{D})$ denote a random vector with a distribution $Q_\theta$ and pdf $q_\theta$. VI seeks to find $\theta$ such that $Q_\theta$ is as close as possible to the real (unknown) posterior $P(W|y, x)$, accomplished by minimizing the $KL$ divergence between $Q_\theta$ and $P(W|y, x)$. Given a prior pdf of weights $p$, along with a likelihood term $p(y|W, x)$, and a common \textit{mean field} assumption of independence for $W^d$ and $W_\theta^d,$ for $d \in 1, \ldots, D$, i.e. $p(W) = \prod_{d=1}^D p^d (W^d)$ and $q_\theta (W_\theta) = \prod_{d=1}^D q_\theta^d (W_\theta^d)$, %

\begin{align}
    &\boldsymbol{\theta}^{*} = \underset{\theta}{\arg \min}\ 
    KL\left(q_\theta||p\right) -
    \EE_{q_{\theta}}\left[\ln p(y| W, x)\right]\label{form:vi}\\
    &KL\left(q_\theta||p\right) =\sum_{d=1}^D\EE_{q_{\theta}^d}\left[ \ln q_{\theta}^d(w)\right] -
        \EE_{q_{\theta}^d}\left[\ln p^d(w)\right]\label{form:kl}
\end{align}

\begin{figure*}[t]

\centering
\begin{subfigure}[t]{0.4\linewidth}
    \centering
    \includegraphics[width=\linewidth]{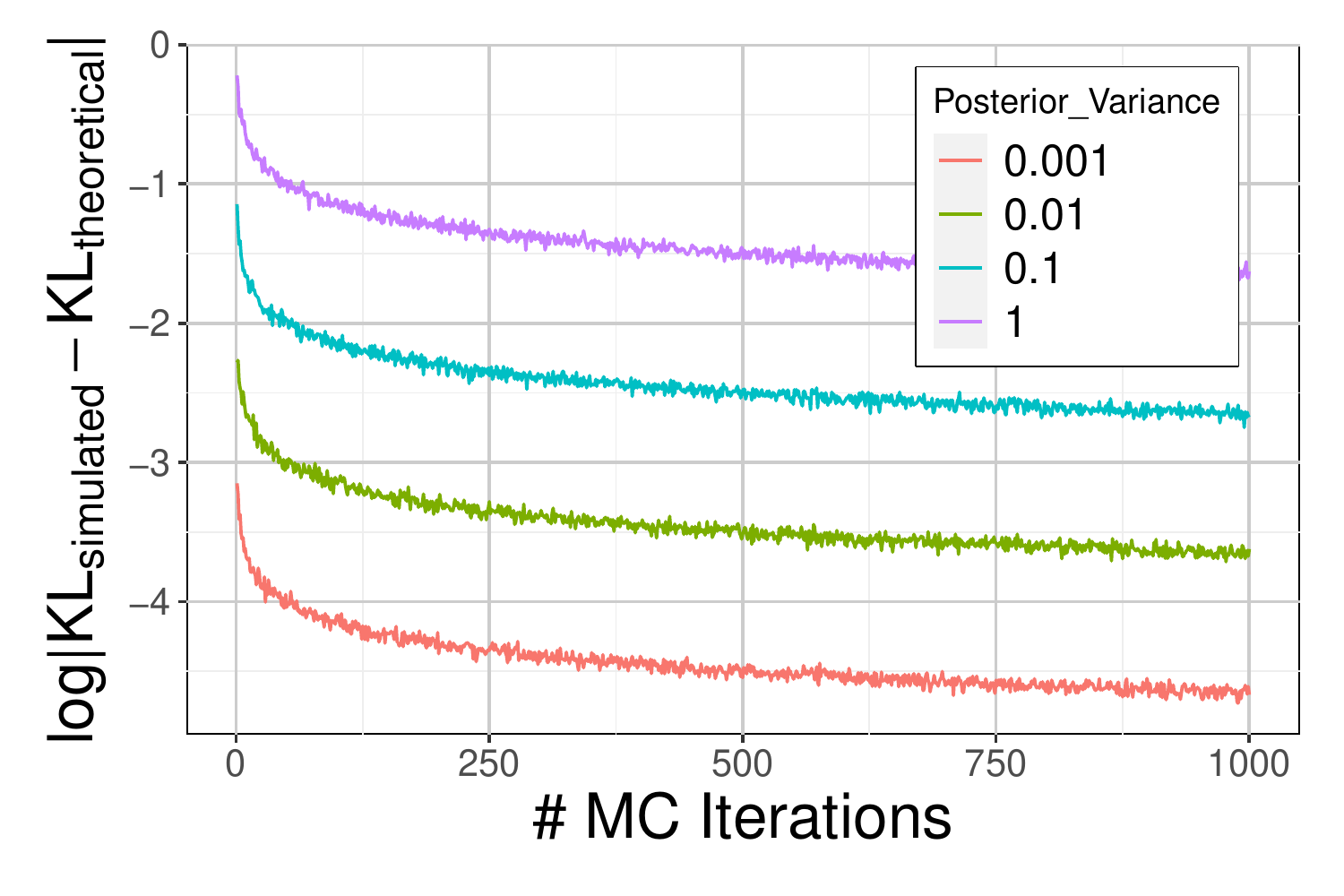}
\end{subfigure}
\hspace{0.5cm}
\begin{subfigure}[t]{0.4\linewidth}
    \centering
    \includegraphics[width=\linewidth]{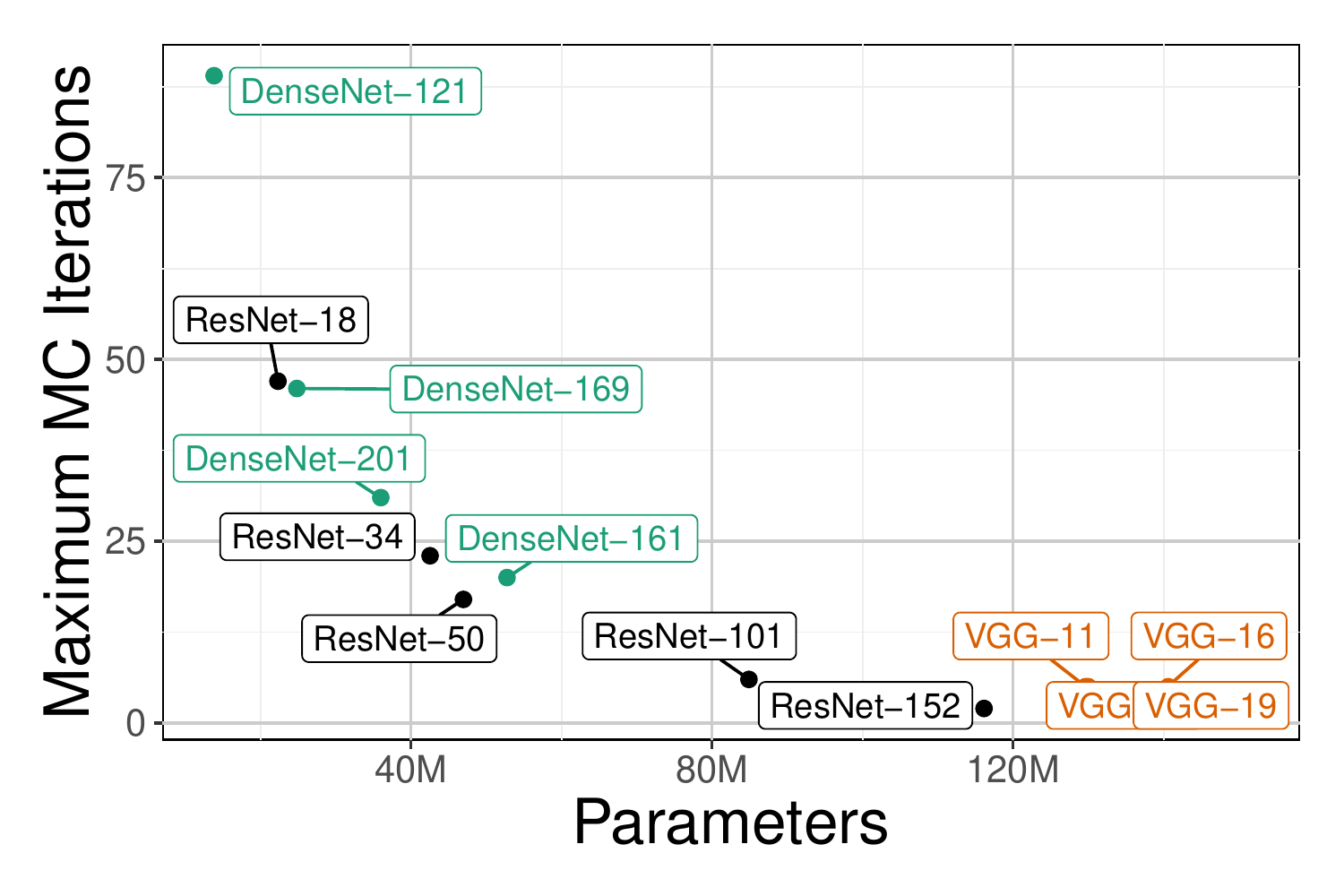} %
    \label{fig:GA}
\end{subfigure}
\vspace{-14pt}
\caption{{\footnotesize \bf(left)} Approximation error ($\log$) of simulated $KL$ for the single-parameter Bayesian Neural Network at different variance values of approximate posterior distribution, {\bf(right)} Maximum number of feasible MC iterations required for training Bayesian versions of different neural networks on a single GPU . } 
\label{fig:klsim}
\end{figure*}

A key consideration in VI is the choice of prior $p$ and the approximate posterior $q_\theta$. This choice 
does not drastically
change 
the computation of the likelihood term $p(y|W, x)$ which 
is influenced more by the problem and the complexity of the network 
instead of $W$ (e.g., it is Gaussian for regression problems). 
But it strongly impacts 
the computation of $KL$ term.
For example, a common choice for $p$, and $q_\theta$ is Gaussian, which allows calculating \eqref{form:kl} in a closed form. However, there is emerging evidence \cite{ farquhar2019radial, fortuin2020bayesian} that the Gaussian assumption may not work well on medium/large scale Bayesian NNs.  \cite{farquhar2019radial} attributes this to the probability mass in high-dimensional Gaussian distributions concentrating in a narrow ``soap-bubble'' far from the mean.
Choosing a correct distribution is an open problem  \cite{ghosh2017model,farquhar2019radial,mcgregor2019stabilising,krishnan2019efficient}, and unfortunately,  more complex distributions frequently lack closed form solutions for \eqref{form:kl}.

{\bf Numerical approximations.}
When the integrals for these expectations cannot be solved in closed form, an approximation is used \cite{ranganath2014black,paisley2012variational,miller2017reducing}. One strategy is Monte Carlo  (MC) sampling, which gives an
unbiased estimator with variance $O(\frac{1}{M})$ where $M$ is number of samples. For a function $g(\cdot)$:
\begin{align}
    \mathbb{E}_{q_\theta}\left[g(w)\right]
    &=
    \int{g(w)q_\theta(w)dw}
    \approx \frac{1}{M}\sum_{i=1}^Mg(w_i),\nonumber\\
    &\text{ where } w_i \sim Q_\theta.
\label{form:mc-intro}
\end{align}
Expected value terms in \eqref{form:kl} can be estimated by applying the scheme in \eqref{form:mc-intro} and in fact, even if a closed form expression can be computed, with enough samples an MC approximation may perform similarly \cite{blundell2015weight}.
Unfortunately, MC procedures are costly, and may need many samples (i.e., iterations) for a good estimation as the model size grows: 
\cite{miller2017reducing} shows this relationship for small networks, and demonstrates that using fewer samples leads to large variances in the approximation.
In general, for deep BNNs, computation of both $KL$ and expectation of log-likelihood requires numerical approximation with MC sampling, but for now, 
we will only focus on the $KL$ term.

\textbf{How does $M$ affect the $KL$ approximation necessary for large scale VI?}
Consider a standard Gaussian distribution for the approximate posterior $q_\theta$ and prior $p$ for the weights of an arbitrary BNN, and also consider an MC approximation of the $KL$ term in (\ref{form:kl}). In this case, we have a closed form solution for $KL$, which allows checking the approximation quality: the gap between the MC approximation $\widehat{KL}$ and the closed form $KL$.

\textbf{(a)}
Figure \ref{fig:klsim} (left) shows this gap %
for different variances of the approximate posterior for 
a BNN. While decreasing the variance of the posterior distribution indeed reduces the variance of an estimator, with such a small variance on weights, the model is essentially deterministic. 
Clearly by increasing $M$, we decrease the error.
However, in current DNNs, increasing the number of MC iterations not only slows down computation, but severely limits GPU memory. 
\textbf{(b)} Figure %
\ref{fig:klsim} (right) presents the maximum number of iterations possible on a single GPU (Nvidia 2080 TI) with a direct implementation of MC approximation for Bayesian versions of popular DNN architectures: ResNet, DenseNet and VGG (more details in \S\ref{sec:pblm}). Extrapolating Figure \ref{fig:klsim}, we see clearly that Bayes versions of these networks will result in large variances. This raises the question: is there a way to increase the number of MC iterations for deep networks without sacrificing performance, memory, or time? 

{\bf Contributions.}
This work makes two contributions. 
\textbf{(a)}
We propose a new framework to construct an MC estimator  for the $KL$ term, which significantly decreases GPU memory needs and improves runtime. Memory savings allow us to run up to 1000$\times$ more MC iterations on a single GPU, resulting in smaller variances of the MC estimators, improving both training convergence and final accuracy, especially on subsets of data where the model is not confident. We show feasibility for popular architectures including ResNets \cite{he2016identity}, DenseNets \cite{huang2017densely}, VGG \cite{simonyan2014very} and U-Net \cite{ronneberger2015u} -- %
strategies for successfully training Bayesian versions of many of these (deep) networks remain limited \cite{dusenberry2020efficient}.
\textbf{(b)}
From the user perspective, we provide a simple interface for implementing and estimating BNNs (Figure.~\ref{fig:code}).
\textbf{(c)}
On the technical side, we obtain a scheme under which we can determine whether our \textit{reparameterization} can be applied. The result covers a broad class of distributions used in VI as an approximate posterior and prior. Inspired by the Pitman–Koopman–Darmois theorem \cite{koopman1936distributions}, we show that our method is effective when an exponential family is used as a prior on weights in deep BNNs estimated via VI, and the approximate posterior is modeled as location-scale or certain other distributions, 
expanding the range of distributions that can be used.

\begin{figure}
\centering
\begin{lstlisting}[language=Python,
        commentstyle=\itshape,
        basicstyle=\ttfamily\smaller,
        frame=lines,
        breakatwhitespace=false,         
        breaklines=true,                 
        keepspaces=true,                 
        showspaces=false,                
        showstringspaces=false,
        showtabs=false,                  
        tabsize=2]
model = AlexNet(n_classes=10, n_channels=3, 
                approx_post="Radial",
                kl_method="repar",
                n_mc_iter=1000)
\end{lstlisting}
\caption{\footnotesize\label{fig:code} 
Proposed MC reparameterization presented as an API. Only a minimal change in an existing programming interface is required to incorporate our method.
See the appendix for details.}
\end{figure}

\begin{figure}[!b]
\centering
\begin{subfigure}[b]{\linewidth}
\centering
\includegraphics[trim={3.5cm 0cm 4cm 0}, height=3.5cm]{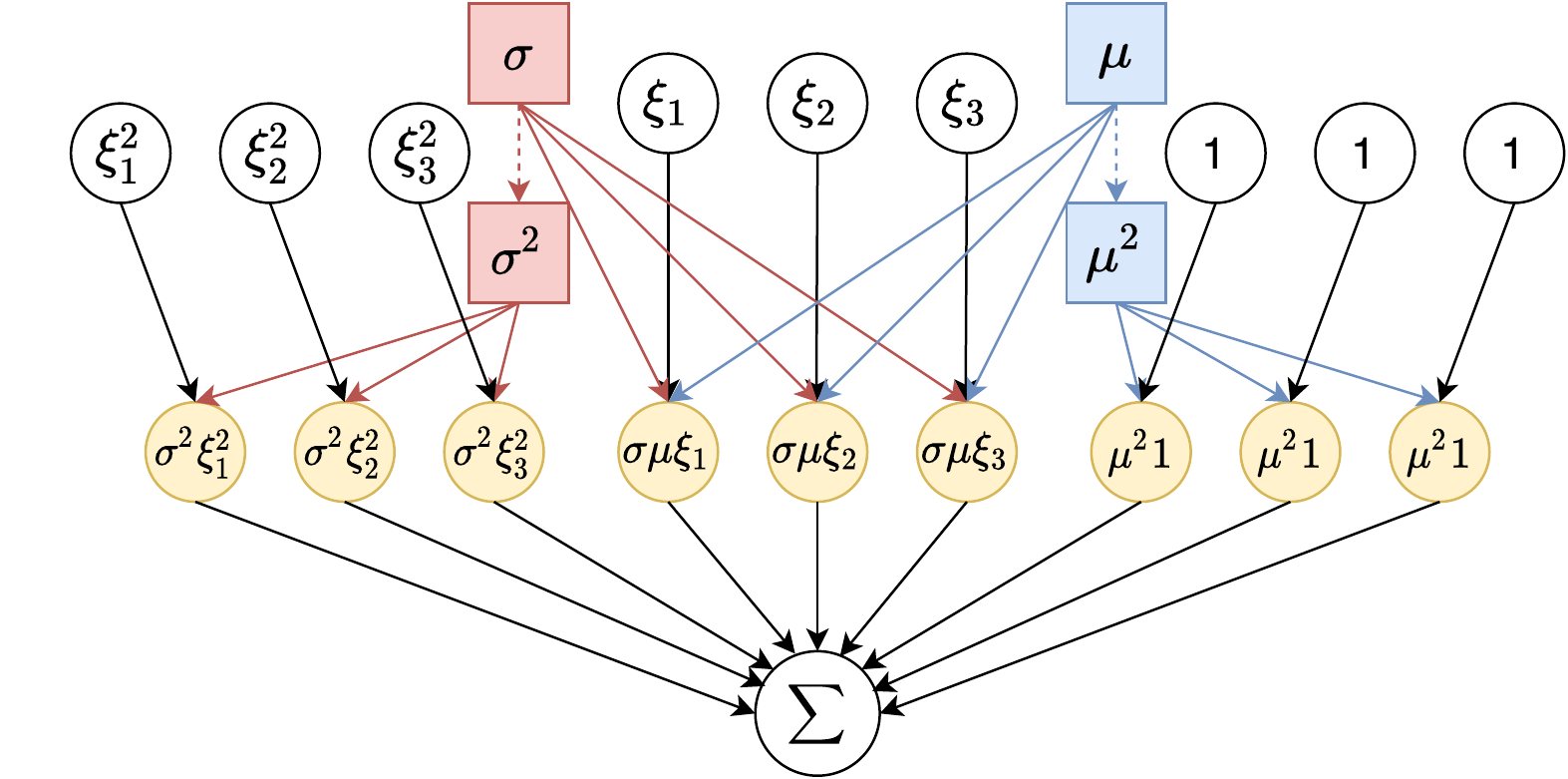} 
\caption{AutoGrad implementation, $d_P = 9$}
\label{fig:compgraph_1}
\end{subfigure}
\vspace{0.1cm}
\begin{subfigure}[b]{\linewidth}
\centering
\includegraphics[trim={2cm 0cm 3cm 0}, height=3.4cm]{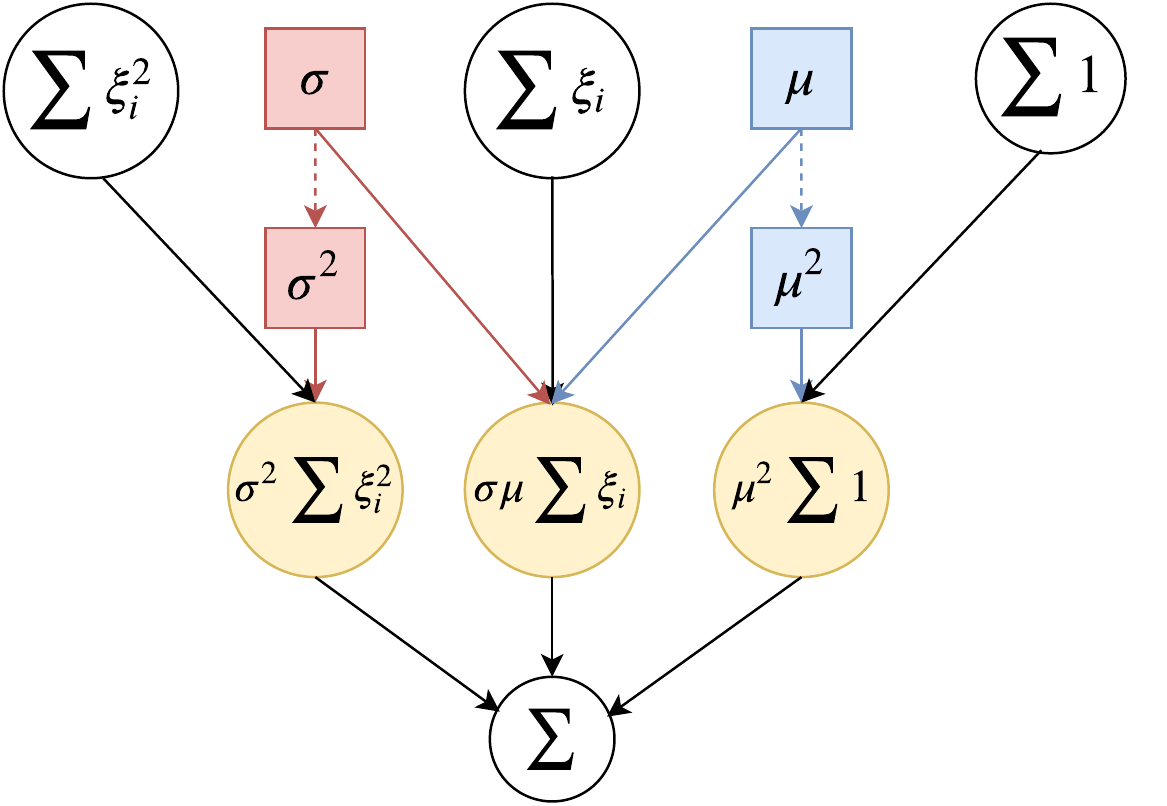}
\caption{$d_P = 3$}
\label{fig:compgraph_3}
\end{subfigure}
\caption{\footnotesize Two computation graphs of the same MC expression $\sum_{i=1}^3 (\mu+\sigma\xi_i)^2$, with different parameterizations. Filled squares represent elements of the vector function $n(\theta)$, clear circles represent functions of auxiliary variables $t(\xi)$, yellow circles represent the Hadamard product $n(\theta)\circ t(\xi)$. Clearly, parameterization affects the size of the graph, and there exists parameterization (b) where the size is independent of the number of MC iterations $M$.
\textbf{Note:} We slightly modify the computational graph presentation for space and clarity.
Actual computation graphs from PyTorch convey the same message. %
}
\label{fig:compgraph}
\end{figure}

\section{Related work}\label{sec:bknd}
In addition to VI, 
the 
literature provides a broad range of ways to estimate posterior predictive distributions. 
Ensemble methods \cite{lakshminarayanan2017simple, pearce2018bayesian, newton2018weighted} can be applied to common networks with minimal modifications; however, they require many forward passes, often similar in terms of space/time to a standard \textit{gradient accumulation} schemes (we provide a PyTorch code snippet in Figure~\ref{fig:ga_code}). 
Figure~\ref{fig:newfig2} provides experimental results showing that gradient accumulation is much slower. %
Other methods like %
Deterministic Variational Inference~\cite{wu2018deterministic} and Probabilistic Backpropagation~\cite{hernandez2015probabilistic}, improve over na\"ive MC implementations of VI, but often approximate the posterior of a neural network with a Gaussian distribution. However, 
\cite{farquhar2019radial} shows that 
Gaussians are sensitive to hyperparameter choices, among other problems during training. For this reason, a non-Gaussian distribution can be used as an approximate posterior in the traditional VI setup, but its lack of a closed form solution for the $KL$ term ends up needing MC approximation. This is where our proposal offers value.
Also, note some other issues that emerge in Deterministic Variational Inference and Probabilistic Backpropagation: (a) the methods need non-trivial modification of the network to perform a moment matching and (b) replacing the Gaussian assumption with another distribution requires new analytical solutions of closed forms. This is more complicated than a MC approximation.

Our work 
is distinct from other works that also target MC estimation in neural networks. 
For example, one may seek to derive new estimators with an explicit goal of variance reduction (e.g., \cite{miller2017reducing}). 
Here, we do {\em not} obtain a new estimator replacing the MC procedure with a smaller variance procedure.
Instead, we study a scheme that makes the computation graph mostly independent of the number of samples, 
and is applicable to ideas such as those in \cite{miller2017reducing} as well. %

\begin{figure}[b]
    \centering

\begin{lstlisting}[language=Python,
        commentstyle=\itshape,
        basicstyle=\ttfamily\small,
        frame=lines,
        breakatwhitespace=false,         
        breaklines=true,                 
        keepspaces=true,                 
        showspaces=false,                
        showstringspaces=false,
        showtabs=false,                  
        tabsize=2]
optimizer.zero_grad()
for _ in range(n_mc_iter):
    output = model(inputs)
    loss = computeLoss(output, targets)
    loss.backward()
optimizer.step()
\end{lstlisting}
    \caption{\footnotesize PyTorch implementation of ``gradient accumulation'' technique, a standard method to collect gradient from several different forward passes. Memory consumption is equivalent to 1 forward pass, but time complexity is proportional to number of forward passes.}
    \label{fig:ga_code}
\end{figure}

\begin{table*}[t]
\centering
\begin{tabular}{m{3.5cm} |cc|ccc}
 \specialrule{1pt}{1pt}{0pt}
\rowcolor{tableheadcolor}
 \multicolumn{1}{c}{Sampling: $W(\theta,\xi)$} & \multicolumn{2}{c}{Approximate Posterior p.d.f. $q_\theta$} & \multicolumn{3}{c}{Prior p.d.f. $p(w)$}\\
\toprule
 \multirow{4}{4cm}{Scaling property family: $W(\theta,\xi)=\theta \xi$ \\and related -- Corollary~\ref{cor:comp_single}}
 & Exponential($\theta$) & Standard Wald($\theta$)& Exponential & Standard Wald & Rayleigh\\
 & Rayleigh($\theta$)    & Weibull($k, \theta$)   & Dirichlet %
 & Chi-squared & Pareto\\
 & Erlang($k, \theta$)   & Gamma($k, \theta$)     & Inverse-Gamma  & Gamma     & Erlang\\
 & Error($a, \theta, c$) & Log-Gamma($k, \theta$) & Log-normal &  Error & Weibull \\
 & Inverse-Gamma($k, \theta$) & & Inverse-Gaussian & Normal & %
 \\
\midrule%
\multirow{3}{4cm}{Location-Scale family: $W(\theta,\xi) = \mu + \sigma \xi$, $\theta =(\mu, \sigma)$}
& Normal($\mu, \sigma$) & Laplace($\mu, \sigma$) & Logistic & Exponential & Normal \\
& Logistic($\mu, \sigma$) & Horseshoe($\mu, \sigma$) & & Laplace & %
\\
& Radial($\mu, \sigma$) & & \multicolumn{3}{c}{Normal variations, e.g., Horseshoe, Radial}\\
\midrule
\multirow{1}{4cm}{Corollary~\ref{cor:comp}}
& Log-Normal($\mu, \sigma$) &  & Dirichlet & Pareto & \\
\bottomrule
\end{tabular}
\caption{ \footnotesize  Summary list of approximate posterior distributions $q_\theta$ and priors $p(w)$, which allows to define a parameterization tuple $P$ for MC estimation, such that $d_P$ is independent of $M$. For every cell in ``Sampling: $W(\theta,\xi)$" we can select any combination of $q_\theta$ and $p(w)$. Reference: Radial \cite{farquhar2019radial}, Horseshoe \cite{ghosh2017model}.
\label{tab:distr}}
\end{table*}

\section{Computation Graphs for MC iterations}\label{sec:pblm}

Despite the ability to approximate the expectation in principle, the minimization in \eqref{form:vi} via \eqref{form:mc-intro} is 
difficult for common architectures, and relies on gradient computations at each iterate. Standard implementations make use of automatic differentiation based on computation graphs \cite{griewank2012invented}.

\textit{Computation graphs} are directed acyclic graphs, where nodes are the inputs/outputs and edges are the operations. If there is a single input to an operation that requires a gradient, its output will also require a gradient. As noted in PyTorch manual (cf. Autograd mechanics), a backward computation is never performed for subgraphs where no nodes require gradients. This allows us to replace such a subgraph with one output node and to define the size of the computation graph as the minimal number of nodes necessary to perform backpropagation: the number of nodes which require gradients. Modern neural networks lead to graphs where the number of nodes range from a few hundred to millions. To define the size of a graph, accounting for the probabilistic nature of the MC approximation, we propose the following construction.

\begin{definition}\label{def:tuple}
Consider $w$ as sampled based on a parameter $\theta$ and an ancillary random variable $\xi$, i.e., $w = W(\theta,\xi)$.
If there exist functions $G$, $n$, and $t$ such that a function $F(w_1, \ldots, w_n)$ can be expressed as $G(n(\theta) \circ t(\xi_1,\ldots, \xi_n))$, then we say $P:=(G,n,t)$ is a \textbf{parameterization tuple} for the function $F$, where $\circ$ is the Hadamard product. Let $d_P$ be the dimension of $n \circ t$, corresponding to the number of nodes requiring gradients with respect to $\theta$.%
\end{definition}
To demonstrate the application of the Def.~\ref{def:tuple}, as an example, consider the computation graph for the MC approximation of the function $g(w)=w^2$ in \eqref{form:mc-intro} and given one weight $W_\theta \sim N(\mu, \sigma^2)$. %
Applying the reparameterization trick: $W_\theta = \mu + \sigma \xi$,  $\xi \sim \rm{N}(0,1)$, the
Python form is, %

\begin{lstlisting}[language=Python, 
        showstringspaces=false,
        formfeed=newpage,
        tabsize=4,
        commentstyle=\itshape,
        basicstyle=\small\ttfamily,
        morekeywords={sampler_normal, torch},
        frame=lines,
        caption={},
        label={code:mc_direct}
        ]
for i in range(M):
    # sample 1 observation from N(0, 1)
    sample = sampler_normal.sample() 
    w = mu * 1 + sigma * sample
    loss += w^2 / M
\end{lstlisting}
The computation graph, 
a function of both the parameters $\theta = (\mu,\sigma)$ and of the auxiliary samples $\xi_1$, $\xi_2$, and $\xi_3$, generated by  PyTorch/AutoGrad for $\mathbf{M=3}$ \textbf{iterations} of this loop is shown in Figure~\ref{fig:compgraph_1}. According to Def.~\ref{def:tuple}, $d_P=9$ and
\begin{align*}
n(\theta) &= (\mu^2,2\sigma\mu,\sigma^2,\mu^2,2\sigma\mu,\sigma^2,\mu^2,2\sigma\mu,\sigma^2), \\
t(\xi) &= (1, \xi_1, \xi_1^2, 1, \xi_2, \xi_2^2, 1, \xi_3, \xi_3^2),\\
G(n(\theta) \circ t(\xi)) &= n_1(\theta)t_1(\xi) + \cdots + n_9(\theta)t_9(\xi)    
\end{align*}
Na\"ively, the graph size grows linearly $O(M)$ with the number of MC iterations, as in the direct implementation (Fig.~\ref{fig:compgraph_1}).
For Bayesian VI in DNNs, this is a problem. We need to perform MC approximations of $KL$ terms at every layer. Also, \cite{miller2017reducing} shows that iterating over a large number of samples $M$ might be important for convergence. This constrains model sizes given limited hardware resources.
One might suspect that a ``for'' loop is a poor way to evaluate this expectation and instead the expression should be vectorized. Indeed, creating a vector of size $M$ and summing it will clearly help runtime. But the loop does {\em not} change the computation graph; all trainable parameters maintain the same corresponding connections to samples, and rapidly exhaust memory.

But graphs for the same function can be constructed differently  (see Fig.~\ref{fig:compgraph_3}). For the right parameterization tuple $P$, we can achieve $d_P=3$. This leads us to,  
\begin{remark}
For computation graph of MC approximation $\sum_{i=1}^Mg(w_i)$ and specific $g$, there exists a parameterization tuple $P=(G, n, t)$, such that $d_P$ is independent of $M$.
\end{remark}

For which class of distributions $Q_\theta$ and functions $g(\cdot)$ can we always construct reparameterizations of the MC estimation \eqref{form:mc-intro}, maintaining the size of the computation graph $d_P$ as {\bf independent} of number of iterations $M$?
We explore this in the next section. 
\section{MC reparameterization enables feasible training}\label{sec:mcpar}

\begin{figure*}[t]
\centering
\begin{subfigure}{0.8\textwidth}
\centering
\includegraphics[height=0.02\textheight,trim={0.8cm 14cm 7.5cm 11cm},clip]{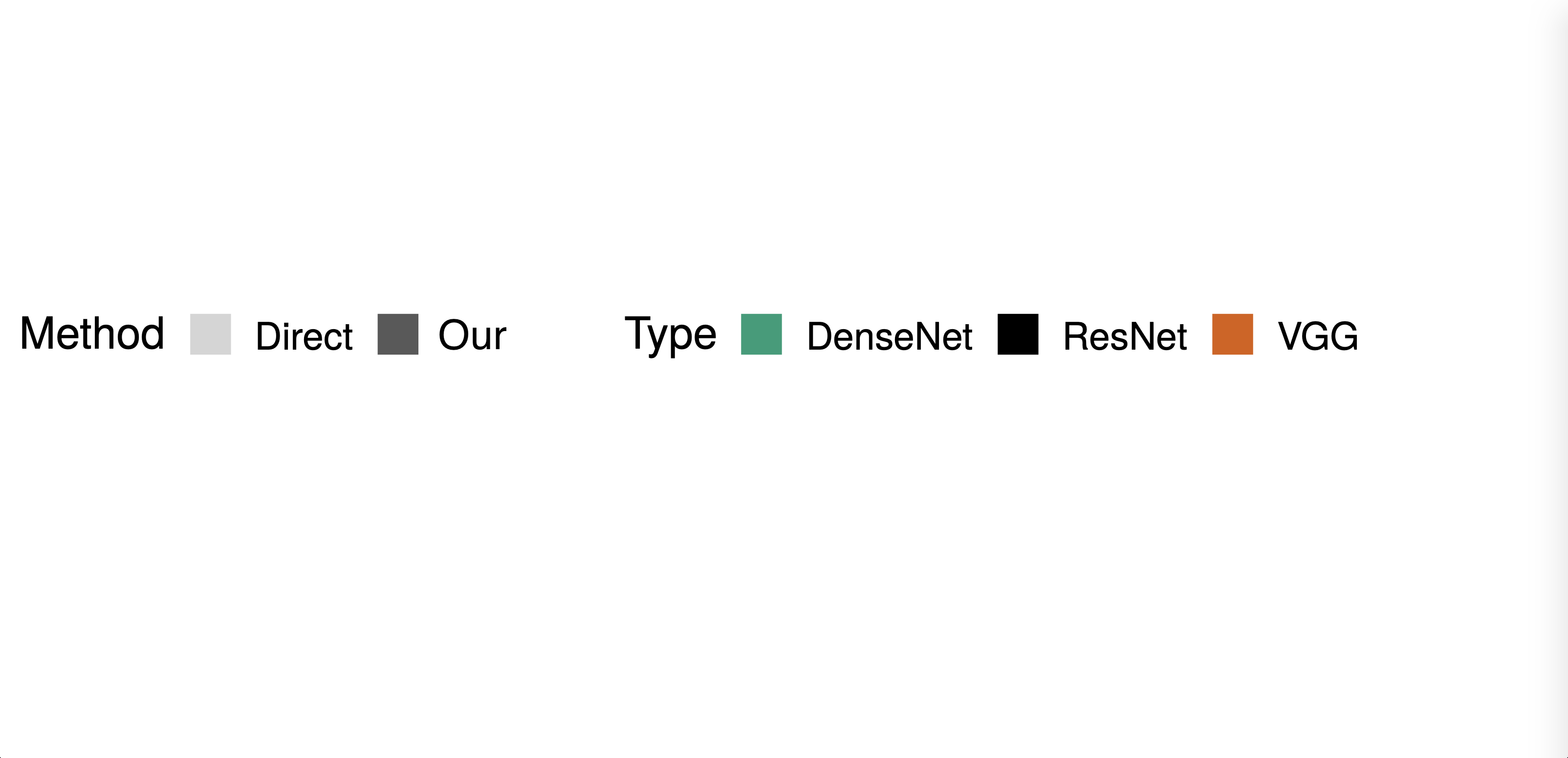} %
\end{subfigure}
\begin{subfigure}{0.4\textwidth}
\centering
\includegraphics[trim={0.3cm 0.3cm 0.3cm 0.3cm}, clip, height=0.2\textheight]{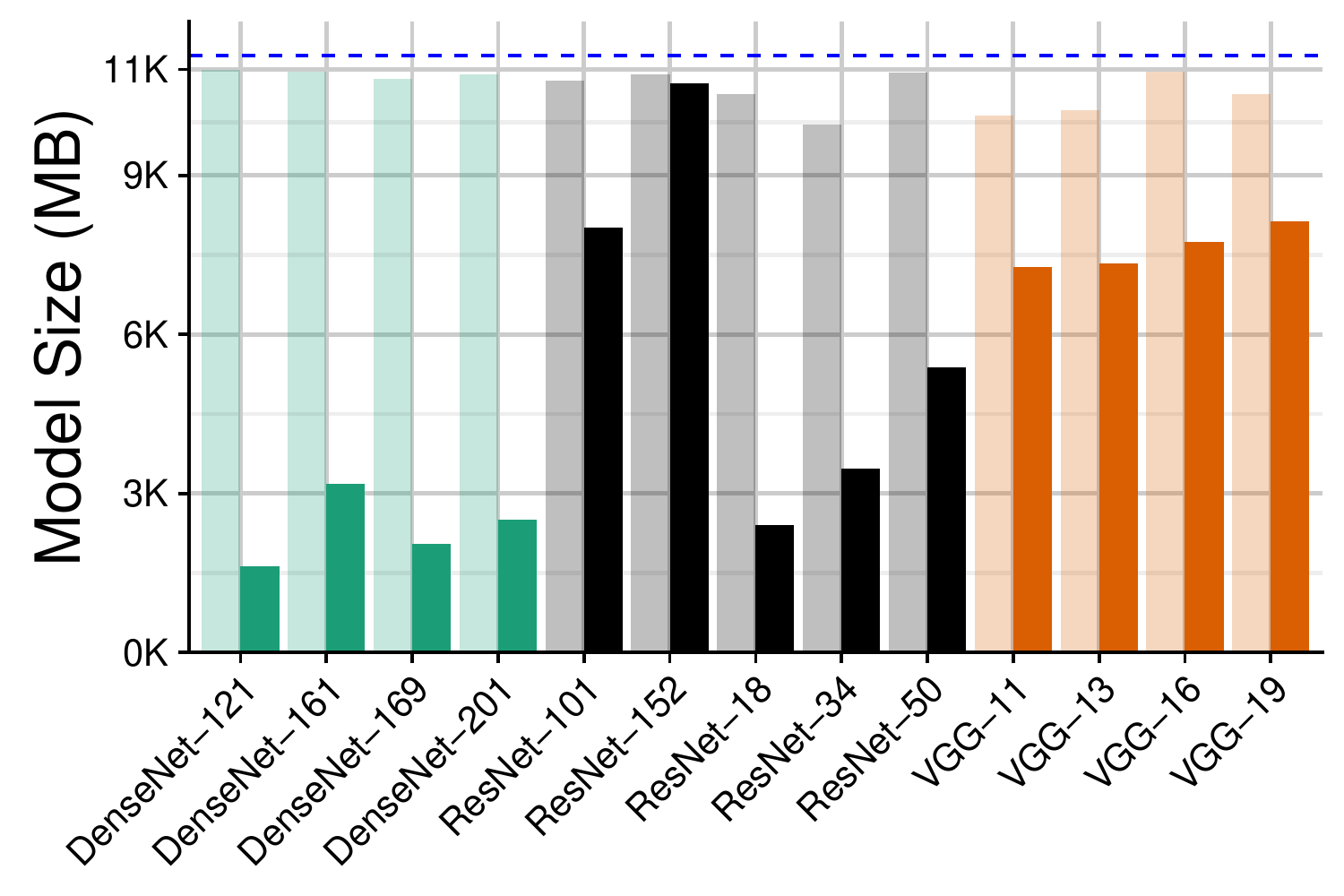}
\caption{\label{fig:space}}
\end{subfigure}
$\quad$
\begin{subfigure}{0.4\textwidth}
\centering
\includegraphics[trim={0.3cm 0.3cm 0.3cm 0.3cm}, clip, height=0.2\textheight]{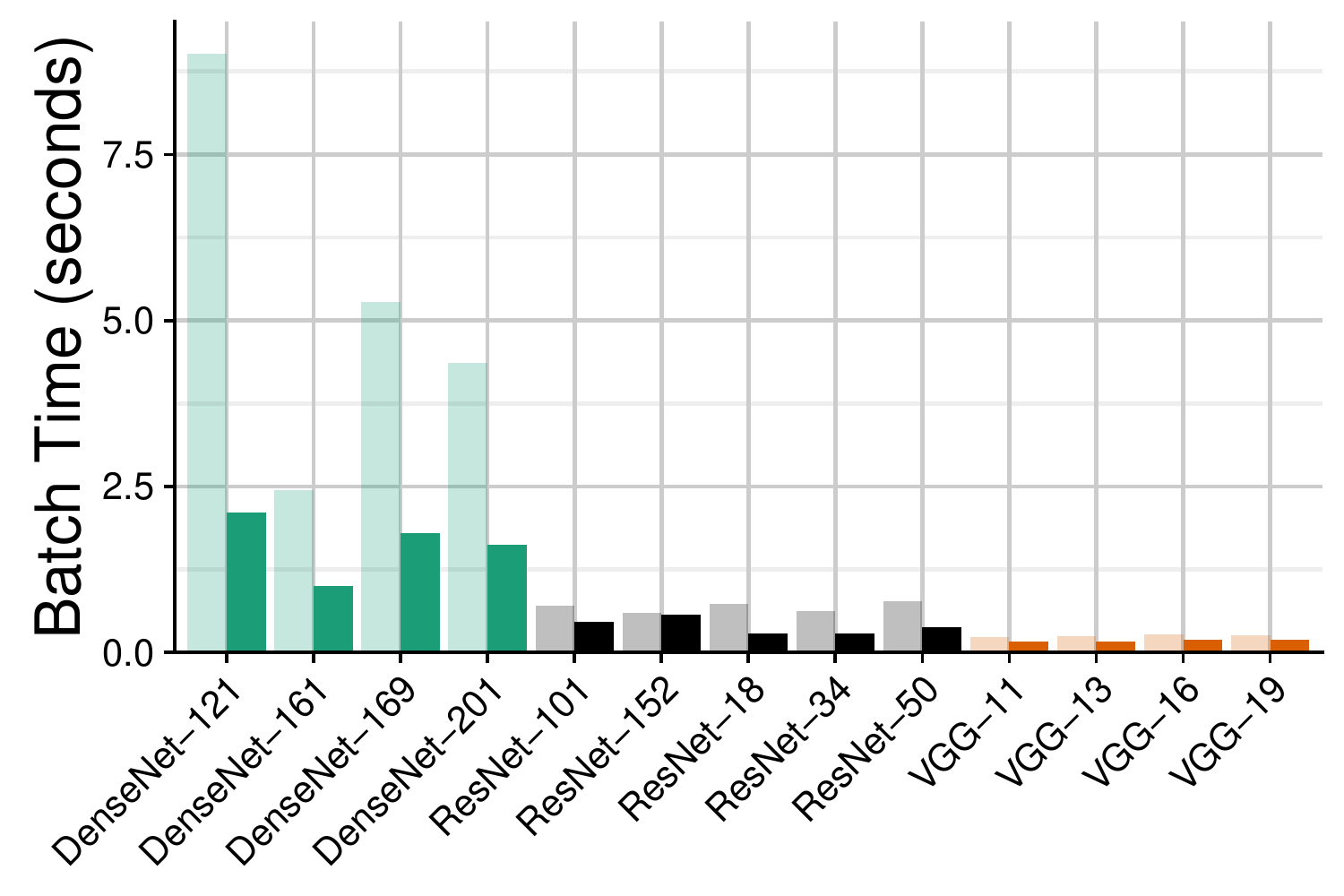}
\caption{\label{fig:Tmax}}
\end{subfigure}
\vspace{-12pt}
\caption{For maximum possible number of MC iterations for a given model via the direct MC method, we show: {\bf(a)} Model size (dashed blue line indicates GPU capacity, 11GB), {\bf(b)} Training time. For some networks, our method occupies less than 25\% of memory and 5 times faster.}
\label{fig:hists}
\end{figure*}

Our approach 
is partly inspired by a vast literature on 
known distributional families and their use within VI. 
For example, in VI, commonly one chooses distributions that fall within \textit{exponential families} (e.g., Gaussian, Laplace, Horseshoe). With this assumption on the prior, we can express 
\begin{equation}
    p(w;\zeta)=h(w){\exp(\eta(\zeta)'}T(w) - A(\zeta))
    \label{form:exp-family}
\end{equation}
where $\zeta$ is a parameter defining $w$.
The sufficient statistics $T(w)$ and natural parameters $\eta(\zeta)$ completely define a specific distribution.

{\bf Relevance of PKD theorem.} 
While the foregoing discussion links our approach to well-known statistical concepts, it does not directly yield our proposed scheme. 
To see this, recall that the Pitman-Koopman-Darmois (PKD) theorem states that for exponential families in \eqref{form:exp-family}, there exist sufficient statistics such that the number of scalar components does not increase as the sample size increases. However, in approximating \eqref{form:kl} with MC, we need to compute not only terms containing the sufficient statistics $T(w)$ {\em but also} $\frac{1}{M}\sum_{i=1}^M \log h(w_i)$. 
Regardless, even though the PKD result cannot be applied directly in our case, it still suggests considering members of the exponential family as candidates for $Q_\theta$. We derive technical results for the forms of $W(\theta,\xi)$ and $g(\cdot)$, where the graph size is not affected by MC sampling.

To approximate $KL$ in \eqref{form:kl}, we need to compute MC estimation \eqref{form:mc-intro} for $g(w)=\log q_\theta(w)$ (or $\log p(w)$). Assume that the factorization form \eqref{form:exp-family} of distributions $q^d_\theta(w)$ (and similarly $p^d(w)$) and recall that the weights of NN are parameterized as 
$w \sim W(\theta, \xi)$. Then, $\EE_\theta\log q_\theta(w)$ is approximated as:
\begin{equation}\label{eq:klapproxexp}
    \frac{1}{M}\sum_{i=1}\left\{\log h(w(\theta, \xi_i)) + \eta(\theta)'T(w(\theta, \xi_i))\right\} - A(\theta)
\end{equation}
To keep the graph size agnostic of $M$, we must handle the initial two terms in \eqref{eq:klapproxexp}. Checking  distributions from Tab.~\ref{tab:distr}, our work reduces to functions of the form $w^k$ and $\log(w)$.

Denote $S$ as the dimension of $\theta$, i.e., number of parameters defining the distribution $Q_\theta$. For example, for the Exponential($\lambda)$: $S=1$ and $\theta = (\lambda)$; for Gaussian($\mu$, $\sigma$): $S=2$ and $\theta = (\mu,\sigma)$. Denote $k$ to be a positive integer.
\begin{theorem}\label{thm:s1}
If $W(\theta,\xi) = \eta(\theta)T(\xi)$ ($S = 1$), then there exists a parameterization tuple $P$ with $d_P = 1$ for the following functions $g(w)$: $w^k$, $\log(w)$, and $\frac{1}{w^k}$.
\end{theorem}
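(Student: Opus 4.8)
The plan is to exploit the multiplicative (scaling) structure $W(\theta,\xi)=\eta(\theta)T(\xi)$ so that the scalar parameter $\theta$ factors out of each Monte Carlo term, leaving a single $\theta$-dependent scalar multiplying a single precomputed, $\xi$-dependent scalar. Throughout, the target $F$ is the MC sum $\sum_{i=1}^{M} g(w_i)$ with $w_i=\eta(\theta)T(\xi_i)$; any global constant such as $1/M$ can be folded into $G$, so I would ignore it. The crux in every case is that $t(\xi_1,\dots,\xi_M)$ is permitted by Definition~\ref{def:tuple} to aggregate over all samples, and since the $\xi_i$ carry no gradient with respect to $\theta$, this aggregation collapses to a single gradient-free node.

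For the power case $g(w)=w^k$, I would substitute $w_i=\eta(\theta)T(\xi_i)$ and use $(ab)^k=a^k b^k$ to obtain $F=\sum_i \eta(\theta)^k T(\xi_i)^k=\eta(\theta)^k\sum_i T(\xi_i)^k$. Taking $n(\theta)=\eta(\theta)^k$, $t(\xi_1,\dots,\xi_M)=\sum_i T(\xi_i)^k$, and $G=\mathrm{id}$ gives $F=G\big(n(\theta)\circ t(\xi)\big)$ with both $n$ and $t$ scalar-valued, hence $d_P=1$. The reciprocal case $g(w)=1/w^k$ is identical with $k$ replaced by $-k$, i.e.\ $n(\theta)=\eta(\theta)^{-k}$ and $t(\xi)=\sum_i T(\xi_i)^{-k}$. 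In both cases $\theta$ enters only through the single factor $n(\theta)$, and $M$ affects only the arithmetic producing the one constant node $t(\xi)$.

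The logarithmic case is the one requiring a different $G$, and I expect it to be the main obstacle, since the naive decomposition is \emph{additive} rather than a clean Hadamard product: $\sum_i \log w_i = M\log\eta(\theta)+\sum_i \log T(\xi_i)$, which superficially demands two separate scalars. The resolution is to convert the sum of logs into a log of a product, $\sum_i \log w_i=\log\big(\prod_i w_i\big)=\log\big(\eta(\theta)^M\prod_i T(\xi_i)\big)$, so the multiplicative structure again separates $\theta$ from $\xi$ inside a single product. Choosing $n(\theta)=\eta(\theta)^M$, $t(\xi)=\prod_i T(\xi_i)$, and $G(u)=\log u$ then yields $F=G\big(n(\theta)\circ t(\xi)\big)$ with scalar $n$ and $t$, so again $d_P=1$.

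In all three cases the decomposition places every dependence on $\theta$ into a one-dimensional $n(\theta)$ and every dependence on the samples into a one-dimensional $t(\xi_1,\dots,\xi_M)$ evaluated without gradient tracking, so $d_P=1$ independently of $M$, as claimed. The only routine point I would still verify is that $\eta$ and $T$ render these scalars well defined (e.g.\ positivity of $T(\xi_i)$ for $\log$ and for $1/w^k$), which holds for the distributions listed in Table~\ref{tab:distr}.
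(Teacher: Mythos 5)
Your proposal is correct, and for the power and reciprocal cases it coincides with the paper's proof: factor $\eta(\theta)^{\pm k}$ out of the MC sum, collect the gradient-free aggregate $\sum_{i} T(\xi_i)^{\pm k}$ into $t$, and take $G$ to be (essentially) the identity, giving $d_P=1$. The genuine difference is the $\log$ case. The paper keeps the additive decomposition $\sum_{i}\log w_i = M\log\eta(\theta) + \sum_{i}\log T(\xi_i)$ and treats the second term as a single constant node; the only node carrying a gradient is $M\log\eta(\theta)$, which relies on reading $d_P$ in Definition~\ref{def:tuple} as counting only nodes \emph{requiring gradients with respect to} $\theta$ (the $\xi$-aggregate is absorbed as a constant rather than entering a Hadamard product with anything $\theta$-dependent). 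Your log-of-product trick, $\sum_{i}\log w_i = \log\bigl(\eta(\theta)^M \prod_{i} T(\xi_i)\bigr)$ with $n(\theta)=\eta(\theta)^M$, $t(\xi)=\prod_{i} T(\xi_i)$, and $G=\log$, instead fits the form $G\bigl(n(\theta)\circ t(\xi)\bigr)$ of Definition~\ref{def:tuple} literally, with a truly one-dimensional Hadamard product --- arguably a cleaner match to the stated definition. What you lose is numerical practicality: in the regime the paper targets ($M$ in the hundreds or thousands), $\eta(\theta)^M$ and $\prod_{i=1}^M T(\xi_i)$ overflow or underflow in floating point, whereas the paper's additive form is stable as implemented; both parameterizations induce the same derivative $M\,\eta'(\theta)/\eta(\theta)$, so this is an implementation concern, not a correctness gap. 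Your closing caveat about positivity of $T(\xi_i)$ (needed for $\log$ and $1/w^k$) applies equally to the paper's own proof and holds for the distributions in Table~\ref{tab:distr}.
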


\begin{corollary}\label{cor:comp_single}
If $W(\theta,\xi)' = f(W(\theta,\xi))$ and $W(\theta,\xi)= \eta(\theta) T(\eta)$, then Theorem~\ref{thm:s1} applies to $W(\theta,\xi)'$ and $g(W(\theta,\xi)')$ if $g(w'(w))$ is: $w^k$, $\log(w)$, and $\frac{1}{w^k}$.
\end{corollary}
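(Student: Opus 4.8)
The plan is to prove the corollary by reducing it to Theorem~\ref{thm:s1} through functional composition, exploiting the fact that an MC estimator of $g(w')$ only ever touches the value of the summand, never the intermediate variable $w'$ in isolation. Writing $w_i = W(\theta,\xi_i)=\eta(\theta)T(\xi_i)$ for the scaling-family samples ($S=1$) and $w_i' = f(w_i)$ for their transforms, the object whose graph size we must bound is $\sum_{i=1}^M g(w_i') = \sum_{i=1}^M g\bigl(f(w_i)\bigr)$. The key observation is that this quantity depends on $(\theta,\xi_1,\dots,\xi_M)$ only through the single composed map $\tilde g := g\circ f$ evaluated at the $w_i$, so the transformed samples $w_i'$ are purely internal nodes carrying no $\theta$-dependence beyond that already present in $w_i$.

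Next I would invoke the hypothesis that $g(w'(w)) = \tilde g(w)$ coincides with one of $w^k$, $\log w$, or $1/w^k$. Then $\sum_i g(w_i') = \sum_i \tilde g(w_i)$ is, verbatim, the MC sum of one of the three functions to which Theorem~\ref{thm:s1} already applies, taken over the scaling-family variable $w=\eta(\theta)T(\xi)$. The theorem therefore supplies a parameterization tuple $P=(G,n,t)$ with $d_P=1$ for $\sum_i\tilde g(w_i)$; since $\sum_i g(w_i') = \sum_i\tilde g(w_i)$ as functions of $(\theta,\xi)$, this same $P$ serves as a parameterization tuple for $g(w')$ with $d_P=1$, which is exactly the assertion.

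The one step I would treat carefully — and the only place a reader might object — is showing that inserting the transformation $w_i'=f(w_i)$ does not silently inflate $d_P$ back to $O(M)$. The resolution is definitional rather than computational: by Definition~\ref{def:tuple}, $d_P$ counts only the nodes requiring gradients with respect to $\theta$ in the minimal backpropagation graph. Once $\tilde g$ is written in the factored form $n(\theta)\circ t(\xi)$ guaranteed by Theorem~\ref{thm:s1}, every per-sample $\xi$-only computation collapses into a single gradient-free statistic $t(\xi_1,\dots,\xi_M)$, leaving precisely one $\theta$-node independent of $M$. I would state this explicitly so that the composition with $f$ is visibly absorbed \emph{inside} the admissible function class, never appended as extra $\theta$-dependent structure outside it; concrete instances (e.g.\ $f(w)=1/w$ carrying the scaling family to its reciprocal, or $f=\log$) then follow immediately.
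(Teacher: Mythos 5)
Your proposal is correct and follows exactly the route the paper intends: the paper gives no explicit proof of this corollary in its appendix, treating it as an immediate consequence of Theorem~\ref{thm:s1}, since the hypothesis that $g(w'(w))$ equals $w^k$, $\log(w)$, or $\frac{1}{w^k}$ means the composition $\tilde g = g\circ f$ is verbatim one of the functions that theorem already covers for $w=\eta(\theta)T(\xi)$. Your additional care in checking that the intermediate nodes $w_i'=f(w_i)$ add no $\theta$-dependent structure to the graph (so $d_P=1$ is preserved under Definition~\ref{def:tuple}) makes explicit the bookkeeping the paper leaves implicit, and is a sound completion of the argument.
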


\begin{theorem}\label{thm:s2}
If $W(\theta,\xi) = \sum_{s=1}^S\eta_s(\theta) T_s(\xi)$, and $g(w) = w^k$, then there exists a parameterization tuple $P$ with 
\begin{align}
d_P = \binom{k + S -1}{S - 1}.
\end{align}
\end{theorem}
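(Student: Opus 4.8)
The plan is to expand the $k$-th power of the affine-in-$\xi$ form $W(\theta,\xi) = \sum_{s=1}^S \eta_s(\theta) T_s(\xi)$ by the multinomial theorem and then separate each resulting monomial into a purely $\theta$-dependent factor and a purely $\xi$-dependent factor. Concretely, for a single sample $w_i = W(\theta,\xi_i)$ I would write
\begin{equation*}
w_i^k = \Bigl(\sum_{s=1}^S \eta_s(\theta) T_s(\xi_i)\Bigr)^k = \sum_{|\alpha| = k} \binom{k}{\alpha} \prod_{s=1}^S \eta_s(\theta)^{\alpha_s}\, T_s(\xi_i)^{\alpha_s},
\end{equation*}
where the sum runs over multi-indices $\alpha = (\alpha_1,\dots,\alpha_S)$ of nonnegative integers with $\alpha_1 + \cdots + \alpha_S = k$ and $\binom{k}{\alpha} = k!/(\alpha_1!\cdots\alpha_S!)$ is the multinomial coefficient. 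The point of this step is that within each monomial the dependence on $\theta$ and on $\xi_i$ factorizes cleanly.

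The key step is then to interchange the two summations in $F = \sum_{i=1}^M w_i^k$, summing over the Monte Carlo index $i$ \emph{inside} each monomial:
\begin{equation*}
\sum_{i=1}^M w_i^k = \sum_{|\alpha|=k} \underbrace{\binom{k}{\alpha} \prod_{s=1}^S \eta_s(\theta)^{\alpha_s}}_{=:\,n_\alpha(\theta)} \;\; \underbrace{\sum_{i=1}^M \prod_{s=1}^S T_s(\xi_i)^{\alpha_s}}_{=:\,t_\alpha(\xi_1,\dots,\xi_M)}.
\end{equation*}
This exhibits a parameterization tuple $P = (G, n, t)$ in the sense of Definition~\ref{def:tuple}: the vector $n(\theta)$ has one component $n_\alpha(\theta)$ per multi-index, the vector $t(\xi_1,\dots,\xi_M)$ carries the matching component $t_\alpha$, and $G$ is the coordinate-wise sum of the Hadamard product $n(\theta)\circ t(\xi)$. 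The conceptual heart is that Definition~\ref{def:tuple} lets $t$ consume all of $\xi_1,\dots,\xi_M$ at once, so the entire $M$-fold sum is a $\theta$-free operation absorbed into $t_\alpha$; hence the only nodes requiring gradients with respect to $\theta$ are the $n_\alpha(\theta)$, and their number is independent of $M$.

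It remains to count the coordinates of $n \circ t$, which equals the number of multi-indices $\alpha=(\alpha_1,\dots,\alpha_S)$ with nonnegative integer entries summing to $k$; by the standard stars-and-bars argument this is $\binom{k+S-1}{S-1}$, giving $d_P = \binom{k+S-1}{S-1}$ as claimed. I do not expect a genuine obstacle here so much as careful bookkeeping: one must check that each $t_\alpha$ is truly free of $\theta$ (so the folded sum over $i$ contributes no gradient-bearing nodes), and that we are free to keep one coordinate per multi-index even if some coefficients happen to coincide, since the statement only asserts the \emph{existence} of a tuple with this dimension. A sanity check against the worked location-scale case $W=\mu+\sigma\xi$ (so $S=2$) with $g(w)=w^2$ yields $\binom{3}{1}=3$, matching the coefficients $\mu^2,\,2\mu\sigma,\,\sigma^2$ and the value $d_P=3$ shown in Figure~\ref{fig:compgraph_3}.
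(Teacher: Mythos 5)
Your proposal is correct and follows essentially the same route as the paper's proof: both expand $w_i^k$ as a degree-$k$ polynomial in the $T_s(\xi_i)$ with $\theta$-dependent coefficients, fold the sum over the $M$ samples into the $\theta$-free indeterminates, and count the monomials by stars and bars to get $\binom{k+S-1}{S-1}$. Your version merely makes explicit (via the multinomial theorem and multi-index notation) what the paper states more tersely as the coefficient count of $p_k^S(T(\undtil{\xi});\eta(\theta))$.
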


\begin{remark}
As long as $d_P < M$, it is possible to create a computation graph of a smaller size by reparameterization, compared to a direct implementation of the MC approximation.
Note that for a small $M$ it is still possible for a parameterization tuple to generate a graph larger than a na\"ive implementation. For example, consider $\sum_{i=1}^M (\mu +\sigma\xi)^2$. When $M=1$, the na\"ive construction would have $d_P = 2, (n = (\mu,\sigma), t = (1, \xi)$, while a ``nicer'' tuple may have $d_P = 3$ independent of $M$ $(n = (\mu^2, 2\mu\sigma,\sigma^2), t = (1, \xi, \xi^2))$.
\end{remark}

\begin{corollary}\label{cor:comp}
If $W(\theta,\xi)' = f(W(\theta,\xi))$ and $W(\theta,\xi)= \sum_{s=1}^S\eta_s(\theta) T_s(\eta)$, where $S\geq 2$, then Theorem~\ref{thm:s2} applies to $W(\theta,\xi)'$ and $g(W(\theta,\xi)')$ if $g(w'(w)) = w^k$. 
\end{corollary}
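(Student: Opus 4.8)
The plan is to reduce Corollary~\ref{cor:comp} to a direct invocation of Theorem~\ref{thm:s2} by exploiting the fact that the hypothesis $g(w'(w)) = w^k$ makes the composition of the transformation $f$ (where $w' = f(w)$) and the outer function $g$ collapse exactly onto a monomial in the untransformed variable. First I would write out the quantity the MC estimator must approximate, namely $\frac{1}{M}\sum_{i=1}^M g\big(W(\theta,\xi_i)'\big)$, and substitute $W(\theta,\xi_i)' = f(W(\theta,\xi_i))$. Applying the hypothesis $g(f(w)) = w^k$ pointwise, this sum equals $\frac{1}{M}\sum_{i=1}^M W(\theta,\xi_i)^k$: the two nonlinearities $f$ and $g$ vanish, leaving a pure $k$-th power of the original sampling expression.

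Next I would observe that by hypothesis $W(\theta,\xi) = \sum_{s=1}^S \eta_s(\theta) T_s(\xi)$ with $S \geq 2$ already matches verbatim the additive form required by Theorem~\ref{thm:s2}, and the function now being applied is precisely $g(w) = w^k$. Hence the hypotheses of Theorem~\ref{thm:s2} hold for the pair $\big(W(\theta,\xi),\, w^k\big)$, and I would invoke it to obtain a parameterization tuple $P = (G, n, t)$ with $d_P = \binom{k+S-1}{S-1}$, independent of $M$.

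The one point demanding care --- and what I regard as the main obstacle --- is justifying that replacing the literal composition $g \circ f$ by the collapsed monomial $w^k$ does not alter the set of admissible parameterization tuples. For this I would appeal directly to Definition~\ref{def:tuple}: a parameterization tuple is a property of the function $F$ being computed, not of any particular graph or implementation path through intermediate nodes for $f$ and $g$. Because $\frac{1}{M}\sum_i g\big(f(W(\theta,\xi_i))\big)$ and $\frac{1}{M}\sum_i W(\theta,\xi_i)^k$ are literally the same function of $(\theta,\xi_1,\ldots,\xi_M)$, any tuple valid for the latter is valid for the former, so the bound on $d_P$ transfers unchanged. To make this concrete I would recall the multinomial expansion underlying Theorem~\ref{thm:s2}, which distributes the $\binom{k+S-1}{S-1}$ products $\prod_{s} \eta_s(\theta)^{\alpha_s}$ into $n(\theta)$ and the matching $\prod_s T_s(\xi)^{\alpha_s}$ into $t(\xi)$; since the $M$-averaged $\xi$-factors carry no gradient with respect to $\theta$, the count of gradient-bearing nodes stays $\binom{k+S-1}{S-1}$ and the transformation $f$ is never instantiated in the graph. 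Note that the argument nowhere requires $f$ to be invertible, since only the functional identity $g \circ f = (\cdot)^k$ is used.
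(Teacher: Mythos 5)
Your proposal is correct and follows exactly the route the paper intends: the hypothesis $g(f(w)) = w^k$ collapses the composition onto a monomial in the original variable $W(\theta,\xi)$, which already has the additive form required by Theorem~\ref{thm:s2}, so that theorem applies verbatim. The paper treats this corollary as immediate and gives no separate proof in the appendix (the corollary counter is simply advanced past it), so your write-up --- including the careful remark that a parameterization tuple is a property of the function itself, making the substitution of $g \circ f$ by $(\cdot)^k$ harmless, and that invertibility of $f$ is never needed --- supplies precisely the details the paper leaves implicit.
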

\paragraph{Relevance of results:} 
(1) Thm.~\ref{thm:s1} can be applied when $W(\theta,\xi)$ represents a distribution with scaling property: any positive real constant times a random variable having this distribution comes from the same distributional family. 
(2) Thm.~\ref{thm:s2} can be applied, when $W(\theta,\xi)$ is a member of the location-scale family.
(3) Corollaries~\ref{cor:comp_single} and ~\ref{cor:comp} are useful when random variables can be presented as a transformation of other distributions, e.g. $\rm{LogNormal}(\mu, \sigma^2)$ can be generated as $\exp(\rm{N(\mu, \sigma^2)})$. 
Table \ref{tab:distr} summarizes the choice of $q_\theta$ and $p$ for Bayesian VI, which lead to the computation graph size $d_P$ being independent of $M$ in MC estimation.

Although Theorem~\ref{thm:s2} does not suggest that there are no nice parameterization tuples for the case where $g(w) = \log w, 1/w^k$, empirically we did not find tuples that allow for $d_P$ to be independent of $M$. But it is interesting to consider an approximation which does allow for this independence.

\subsection{Taylor Approximated Monte Carlo}
Our results extend to the generic polynomial case where $g(w) = p_K(w)$, an arbitrary polynomial of degree $K$:
\begin{corollary}\label{cor:poly}
If $W= \sum_{s=1}^S\eta_s(\theta) T_s(\xi)$, and $g(w) = p_K(w)$, then there exists parameterization tuple $P$, such that for any $M$ iterations
\begin{align}\label{form:dp_polyn}
    d_P = \binom{K+S}{S} -1.
\end{align}
\end{corollary}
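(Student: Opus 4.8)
The plan is to reduce the polynomial case to the monomial case of Theorem~\ref{thm:s2} by linearity, and then to count the distinct $\theta$-dependent nodes that survive after expanding each monomial with the multinomial theorem and interchanging the Monte Carlo average with the sum over monomials. The conceptual move that eliminates the $M$-dependence is exactly the one behind Theorem~\ref{thm:s2}: every monomial in $w$ factors into a pure-$\theta$ part and a pure-$\xi$ part, so the $M$ samples only ever enter through $\xi$-averages that carry no gradient.

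First I would write $p_K(w) = \sum_{k=0}^K c_k\, w^k$ and substitute $w = \sum_{s=1}^S \eta_s(\theta) T_s(\xi)$. Expanding each power by the multinomial theorem gives
\begin{align}\label{form:multinom}
w^k = \Bigl(\sum_{s=1}^S \eta_s(\theta)\,T_s(\xi)\Bigr)^k = \sum_{|\alpha|=k}\binom{k}{\alpha}\prod_{s=1}^S \eta_s(\theta)^{\alpha_s}\, T_s(\xi)^{\alpha_s},
\end{align}
where $\alpha = (\alpha_1,\dots,\alpha_S)$ ranges over multi-indices with $|\alpha| = \sum_s \alpha_s = k$. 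As in Theorem~\ref{thm:s2}, each summand splits cleanly into a factor depending only on the parameters, $n_\alpha(\theta) = \prod_s \eta_s(\theta)^{\alpha_s}$, and a factor depending only on the auxiliary noise, $t_\alpha(\xi) = \prod_s T_s(\xi)^{\alpha_s}$, with the scalar $\binom{k}{\alpha}$ and the coefficient $c_k$ absorbed into a constant $\tilde c_\alpha$.

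The key step is to collect these terms across all degrees $k = 0,\dots,K$ and swap the order of summation in the estimator:
\begin{align}\label{form:swap}
\frac{1}{M}\sum_{i=1}^M p_K(w_i) = \sum_{0\le|\alpha|\le K} n_\alpha(\theta)\,\underbrace{\Bigl(\frac{1}{M}\sum_{i=1}^M \tilde c_\alpha\, t_\alpha(\xi_i)\Bigr)}_{\text{depends on }\xi\text{ only}}.
\end{align}
The bracketed averages are functions of $\xi_1,\dots,\xi_M$ alone; by the collapsing rule in Def.~\ref{def:tuple} they form a subgraph requiring no gradient and contract to a single node per $\alpha$. This is precisely where $M$ leaves the graph: the only components of $n \circ t$ requiring a gradient are the products $n_\alpha(\theta)\,t_\alpha$ with $|\alpha|\ge 1$, and \eqref{form:swap} exhibits a valid tuple $P=(G,n,t)$ with $G$ the final weighted sum over $\alpha$.

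Finally I would count the surviving nodes. The number of multi-indices $\alpha$ in $S$ variables with $0\le|\alpha|\le K$ equals $\binom{K+S}{S}$, via the hockey-stick identity $\sum_{k=0}^K \binom{k+S-1}{S-1} = \binom{K+S}{S}$, each summand being the degree-$k$ count from Theorem~\ref{thm:s2}. The single index $\alpha = 0$ contributes $n_0(\theta) = 1$, i.e.\ the constant $c_0$, whose $n$-component is $\theta$-independent and therefore requires no gradient; removing it yields $d_P = \binom{K+S}{S} - 1$, independent of $M$. I expect the main obstacle to be bookkeeping rather than depth: one must argue that the count is \emph{exact} rather than an over-count, i.e.\ that distinct multi-indices generically produce genuinely distinct and $\theta$-dependent components of $n\circ t$ (so none coincide and none is inadvertently constant except $\alpha = 0$), and that the interchange in \eqref{form:swap} is exactly what decouples the graph size from $M$. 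The identity and the single ``$-1$'' correction are the only places the polynomial count departs from a term-by-term application of Theorem~\ref{thm:s2}.
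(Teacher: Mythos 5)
Your proof is correct and takes essentially the same route as the paper: expand each power $w^k$ via the multinomial theorem exactly as in Theorem~\ref{thm:s2}, collect terms across degrees $k=1,\dots,K$ by linearity so each surviving node is a product $n_\alpha(\theta)\,t_\alpha(\xi)$, and drop the $\theta$-independent constant term, with the count $\sum_{k=1}^{K}\binom{k+S-1}{S-1}=\binom{K+S}{S}-1$ given by the hockey-stick identity (the paper's own proof is just this, stated tersely as a note about ignoring $a_0$). Your closing concern about the count being exact is unneeded, since $d_P$ is the dimension of the constructed tuple and the corollary only asserts existence of a tuple of that dimension.
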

So, can we find a parameterization tuple for any $g(w)$ that we can approximate via a polynomial Taylor expansion?
\begin{theorem}
Let $W= \sum_{s=1}^S\eta_s(\theta) T_s(X)$, $S \geq 2$. If an approximation of $g(w)$  uses $K$ Taylor terms, then Cor.~\ref{cor:poly} applies.%
\end{theorem}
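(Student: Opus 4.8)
The plan is to reduce this statement directly to Corollary~\ref{cor:poly}, which already covers the polynomial case; the only real work is to verify that a $K$-term Taylor approximation of $g$ is literally a polynomial of bounded degree in $w$, so that the corollary's hypothesis $g(w) = p_K(w)$ is met. First I would fix a constant expansion center $a$ (for instance $a=0$, or any fixed point in the support that does not depend on $\theta$) and write the approximation as
\[
\tilde g(w) = \sum_{j=0}^{K} c_j\, w^j, \qquad c_j = \frac{g^{(j)}(a)}{j!},
\]
where the coefficients $c_j$ are constants determined by $g$ and $a$ alone. This requires only that $g$ be $K$-times differentiable at $a$; no global analyticity is needed, since we build the parameterization tuple for the truncation $\tilde g = p_K$ rather than for $g$ itself.

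Because $\tilde g = p_K$ is a polynomial of degree at most $K$, and because $W = \sum_{s=1}^S \eta_s(\theta) T_s(X)$ already matches the decomposition required by Corollary~\ref{cor:poly}, I would invoke that corollary verbatim with $g(w) \mapsto p_K(w)$. It yields a parameterization tuple $P=(G,n,t)$ whose dimension $d_P = \binom{K+S}{S}-1$ is independent of $M$, which is exactly the claim. Concretely, expanding each power $w^j = \big(\sum_{s} \eta_s(\theta) T_s(X)\big)^j$ by the multinomial theorem produces monomials indexed by exponent vectors $(a_1,\dots,a_S)$ with $\sum_s a_s \le K$; the distinct products $\prod_s \eta_s(\theta)^{a_s}$ become the components of $n(\theta)$, the matching $\prod_s T_s(X)^{a_s}$ become $t(\xi)$, and $G$ collects the constant coefficients $c_j$ together with the Monte Carlo averaging. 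The all-zero exponent gives the constant term, which requires no gradient and accounts for the $-1$.

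The step that needs care --- and the only place a subtlety hides --- is insisting that the expansion center $a$ be independent of $\theta$. If one instead expanded about a $\theta$-dependent point such as $\EE[W]$, the coefficients $c_j$ would themselves become functions of $\theta$ and would have to be merged into $n(\theta)$, potentially inflating the count of gradient-bearing nodes and breaking the clean reduction. Keeping $a$ fixed guarantees the $c_j$ are absorbed into $G$ as constant scalars, so the node count is governed purely by the $\eta$-monomials exactly as in Corollary~\ref{cor:poly}. The remaining bookkeeping --- whether ``$K$ Taylor terms'' is read as degree $K$ or degree $K-1$ --- only shifts the index inside the binomial coefficient and never reintroduces a dependence on $M$.
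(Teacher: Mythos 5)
Your proof is correct, and it reaches the conclusion by a cleaner route than the paper does. Both arguments reduce the theorem to Corollary~\ref{cor:poly}; the entire difference lies in how the shift $(w-a)^k$ is handled. The paper keeps the truncation in powers of $(w-a)$, so the constant $-a$ survives inside $\bigl(\sum_s \eta_s(\theta)T_s(\xi_i)-a\bigr)^k$, and it must then either treat $-a$ as an $(S{+}1)$-st indeterminate (yielding the looser count $\binom{K+S+1}{S+1}-(K+1)$ after pruning the pure-$a$ terms) or invoke its Lemma~\ref{lemma:cut_size} to absorb $a$ into a modified coefficient $\eta_j^*(\theta)=\eta_j(\theta)-a/c$, which requires that some $T_j(\xi)$ be constant (true for location-scale families, where one sufficient statistic is identically $1$, but an extra hypothesis nonetheless). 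You instead re-expand the truncation into powers of $w$ itself; since the center $a$ is a fixed constant, the resulting coefficients are $\theta$-independent scalars, so Corollary~\ref{cor:poly} applies verbatim and gives $d_P=\binom{K+S}{S}-1$ with no auxiliary lemma and no assumption on the $T_s$. This is both more general than the paper's second route and tighter than its first, and it sidesteps the paper's loose bookkeeping (its route~2 reports $\binom{K+S}{S}+1$ and merely asserts it ``reduces'' to Eq.~\eqref{form:dp_polyn}). Your insistence that the expansion center be independent of $\theta$ is exactly the right caveat and matches the paper's implicit assumption (``$a$ is a constant,'' and the appendix note that polynomial coefficients must not depend on $\theta$). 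One cosmetic slip: $c_j=g^{(j)}(a)/j!$ is the coefficient of $w^j$ only when $a=0$; for a general constant center the coefficients of $w^j$ are finite combinations of the $g^{(k)}(a)$ with powers of $a$, which are still constants, so nothing in your argument changes.
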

\textbf{Practical implications.} 
If one is limited to running a maximum number of MC iterations $M_{max}$,
such an approximation of $g(w)$ allows a tradeoff between accuracy of running just $M_{max}$ iterations for the real $g(w)$ versus approximating $g(w)$ with $K(M_{max})$ terms and running $M \gg M_{max}$ iterations instead, since $d_p$ is independent of $M$. 
This strategy may not work for approximating non-polynomial functions,  
and is a ``fall-back'' that could be used for arbitrary distributions.
\begin{example}
Let $W = \mu + \sigma \xi \implies S=2$ and $g(w) = \log w$, then 
$$
    \begin{aligned}
        \sum_{i=1}^M g(w_i) &=  \sum_{i=1}^M \log(w_i) \approx \sum_{i=1}^M \sum _{k=0}^{K} \frac {1}{k!}(\mu + \sigma \xi_i - 1)^{k}\\
    \end{aligned}
$$
where we take the Taylor expansion of $\log(w)$ around $w=1$. This is clearly a polynomial function of order $K$, and applying Corollary~\ref{cor:poly}, we have $d_P = \frac{1}{2}(K+1)(K+2) - 1$ interactions. For example, if one is able to run just 9 direct MC iterations, it is possible to approximate $g(w)$ with $K = 3$ terms, allowing any number of MC iterations $M$.
\end{example}

\subsection{Applying reparameterization in Bayesian NN}
Recall that training a Bayesian NN via VI requires the approximation of both the $KL$ term and expected value of log-likelihood in \eqref{form:kl}. While it is clear how MC reparameterization can be applied to approximate the $KL$ term, 
what can we say about the likelihood term?
In general, this term cannot be handled 
by the ideas described so far although some practical strategies are possible.  

Usually, estimating the expectation of the likelihood term is based on \cite{kingma2013auto, kingma2015variational}, where for every data item $b$ in the minibatch (of size $B$), one MC sample is selected, which results in $B$ different samples -- in fact, \cite{kingma2013auto} suggests that the number of samples per data item can be set to one if the minibatch size is ``large enough'' which we will discuss more shortly. 
If a large $B$ is feasible, then 
our scheme might not contribute substantially 
in estimating the likelihood term. However, if $B$ is small, 
then our scheme can provide some empirical benefits, described next.

Let $(x, y)$ be the observed data and $(x_b, y_b)$ be the observed $b$-th data point. Let $w$ correspond to the weights of NN with $L$ layers. We can use $w(l,\cdot)$ to index the weights of 
layer $l$. 
Note that we can draw a unique sample of $w$ 
for each data point $b$ which 
we denote as $w(l,b)$. When $M$ samples 
are drawn for $b$, these will be indexed by 
$w_i(l,b)$ for $i=1,\cdots,M$. Notice that $w_1(l, b)$ is the same as $w(l,b)$.
In the forward pass, $u^l_b$ is the output for the $b$-th data point and $u^L_b$ is the output of the last layer for data point $x_b$. %

\begin{observation}[Likelihood form in BNN]\label{obs:like_NN}
Consider the following form for regression and classification tasks,\\ 
\textbf{Regression:}
Consider $y \sim N(u^L, \widehat\sigma)$, where $\widehat\sigma$ is fixed. Then,
\begin{equation*}%
\begin{aligned}[b]
\hspace{-0.1cm} \log p\left(y_{b} \mid w, x_{b}\right)=
&\log \left(\frac{1}{\sqrt{2 \pi}}\exp \left(-\frac{1}{2}\left(y_b-u_b^L\right)^{2}\right)\right)\\
=&\log \frac{1}{\sqrt{2\pi}} -\frac{1}{2}y_b^{2}-y_b u_b^L+\frac{1}{2}(u_b^L)^2.
\end{aligned}
\end{equation*}
{\ } \\ 
\textbf{Classification: }
Consider a binary classification problem. Then, $y \sim \text{Bern}\left(p\right)$, where $p=\frac{1}{1+\exp\left(-u^L\right)}$.
Thus,
\begin{equation*}
\begin{aligned}
\hspace{-0.3cm} \log p\left(y_{b} \mid w, x_{b}\right)=&\log \left(p^{y_b}(1-p)^{1-y_b}\right)\\
=&-\log\left(1+e^{-u_b^L}\right)-(1-y_b)u_b^L \\
=&-\log(2) + \frac{u_b^L}{2} - \frac{\left(u_b^L\right)^2}{4} + O\left(\left(u_b^L\right)^3\right)\\
&-(1-y_b)u_b^L.
\end{aligned}
\end{equation*}
\end{observation}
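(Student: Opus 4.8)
The plan is to establish both parts of Observation~\ref{obs:like_NN} by direct computation, since each reduces to taking the logarithm of the assumed likelihood density and (for classification) Taylor-expanding a single scalar function of the network output $u_b^L$. Nothing deep is required; the real value of the observation lies in \emph{recognizing} that the resulting expressions are polynomials in $u_b^L$, which is what ultimately connects the likelihood term to the reparameterization machinery developed earlier.

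For the regression case I would start from the fixed-variance Gaussian density (taking $\widehat\sigma = 1$ so that the normalization is $\tfrac{1}{\sqrt{2\pi}}$), take its logarithm to separate the constant $\log\tfrac{1}{\sqrt{2\pi}}$ from the exponent $-\tfrac12(y_b - u_b^L)^2$, and then expand the square $(y_b - u_b^L)^2 = y_b^2 - 2 y_b u_b^L + (u_b^L)^2$. Collecting the resulting monomials in $u_b^L$ yields the displayed quadratic as an exact identity, with no approximation. For the classification case I would begin from the Bernoulli log-likelihood $y_b \log p + (1-y_b)\log(1-p)$ and substitute the sigmoid $p = 1/(1+e^{-u_b^L})$. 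Using $\log p = -\log(1+e^{-u_b^L})$ and $\log(1-p) = -u_b^L - \log(1+e^{-u_b^L})$, the two $\log(1+e^{-u_b^L})$ contributions combine into a single copy because $y_b + (1-y_b) = 1$, leaving the intermediate form $-\log(1+e^{-u_b^L}) - (1-y_b)u_b^L$.

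To reach the final polynomial form I would Taylor-expand $f(u) := -\log(1+e^{-u})$ about $u = 0$. The only step needing any care is computing its derivatives correctly; writing $f'(u) = e^{-u}/(1+e^{-u})$ and $f''(u) = -e^{-u}/(1+e^{-u})^2$, and evaluating these at $u=0$ together with $f(0) = -\log 2$, lets me read off the constant, linear, and quadratic coefficients of the displayed expansion, with the tail collected into $O((u_b^L)^3)$ and the separate linear term $-(1-y_b)u_b^L$ carried through unchanged. This is the main (and essentially only) obstacle, and it is mild. The conceptual payoff, rather than the arithmetic, is what matters: both the exact regression expression and the truncated classification expression are polynomials in $u_b^L = u_b^L(w)$ with $w = W(\theta,\xi)$, so they fall squarely under Corollary~\ref{cor:poly}, which is precisely what lets the graph-size guarantee ($d_P$ independent of $M$) extend from the $KL$ term to the expected log-likelihood term.
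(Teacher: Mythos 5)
Your method is exactly the paper's: Observation~\ref{obs:like_NN} has no separate proof in the appendix, and the chain of equalities displayed in its statement \emph{is} the paper's proof, obtained precisely as you describe (take the log of the Gaussian density and expand the square; take the Bernoulli log-likelihood, substitute the sigmoid, merge the two $\log(1+e^{-u_b^L})$ terms using $y_b + (1-y_b)=1$, then Taylor-expand $f(u)=-\log(1+e^{-u})$ about $u=0$). Your intermediate identities, including $\log(1-p) = -u_b^L - \log\bigl(1+e^{-u_b^L}\bigr)$ and the derivatives $f'(u) = e^{-u}/(1+e^{-u})$, $f''(u) = -e^{-u}/(1+e^{-u})^2$, are all correct.

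However, your claim that these computations ``yield the displayed'' expressions is where the write-up slips: carried out correctly, they \emph{contradict} the paper's display in two places, and you do not flag either. In the regression case,
\begin{equation*}
-\tfrac{1}{2}\left(y_b - u_b^L\right)^2 = -\tfrac{1}{2}y_b^2 + y_b u_b^L - \tfrac{1}{2}\left(u_b^L\right)^2,
\end{equation*}
whereas the Observation states $-\tfrac{1}{2}y_b^2 - y_b u_b^L + \tfrac{1}{2}(u_b^L)^2$, i.e., the cross and quadratic terms have flipped signs. In the classification case, the quadratic Taylor coefficient is $f''(0)/2! = -\tfrac{1}{8}$, so the expansion should read $-\log 2 + \tfrac{u_b^L}{2} - \tfrac{(u_b^L)^2}{8} + O\bigl((u_b^L)^3\bigr)$; the paper's $-\tfrac{(u_b^L)^2}{4}$ drops the $1/2!$ factor, and your statement that evaluating $f''(0)=-\tfrac14$ lets you ``read off'' the displayed coefficient conflates $f''(0)$ with $f''(0)/2$. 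Neither discrepancy damages the purpose of the Observation --- both log-likelihoods remain, up to $O\bigl((u_b^L)^3\bigr)$, quadratic polynomials in $u_b^L$, so the connection to Corollary~\ref{cor:poly} that you emphasize goes through unchanged --- but a careful proof should state the corrected coefficients rather than assert agreement with formulas that are in error.
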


Based on the above description, let us assume that the final layer output $u^L$ corresponds to a convolution or a fully connected 
layer with no activation function. Then, the log-likelihood term in a regression and classification setup can be expressed as
$$\log p(y_b|w, x_b) = \text{polynomial}(u_b^{L-1}w(L)).$$ 

{\bf SGVB Estimator.}
Following \cite{kingma2013auto}, the $\EE_{q_\theta}\left[\log p(y|w, x)\right]$ 
term for the minibatch (of size $B$) can be written as 
$$
S_1\coloneqq\frac{1}{B}\sum_{b=1}^B \EE_{q_\theta}\left[\log p(y_b|w, x_b)\right].
$$ 
To approximate the expectation, we use $1$ sample $w(\cdot, b)$ for each data point $(x_b, y_b)$, which results in $S_1=\frac{1}{B}\sum_{b=1}^B\log p(y_b|w(\cdot,b), x_b)$. 
Substituting in
$\text{polynomial}(u_b^{L-1}w(L, b))$ into $\log p(y_b|w(\cdot, b), x_b)$ leads to the 
following form for variance 
$V(S_1)$,
\begin{equation}\label{eq:vs1}
\begin{aligned}[b]
\frac{1}{B}\left(\right.V\left(w(L,b)\right)\EE\left[\left(u_b^{L-1}\right)^2\right] +
V(u_b^{L-1})\EE^2\left[w(L,b)\right] \left.\right), 
\end{aligned}
\end{equation}
plus higher order terms 
which decreases as 
$B$ grows. By efficiently evaluating the KL term, we can utilize the memory savings to increase the batch size $B$ and thus, to decrease the variance of $S_1$.

{\bf MC Reparameterization estimator of likelihood.} 
The above strategy is 
practically sufficient.  
However, if $B$ is limited by hardware, 
we can use the memory savings for more MC samples (higher $M$) for 
improving 
the estimate of the log likelihood term. This 
reduces the variance of first term in
\eqref{eq:vs1} by a factor of $M$, but the scheme described is  
restricted to the last layer.

\begin{figure*}[t]
    \centering
    \begin{subfigure}{0.4\textwidth}
    \centering
    \includegraphics[trim={0.1cm 0.2cm 0.2cm 0.1cm}, clip, height=0.15\textheight]{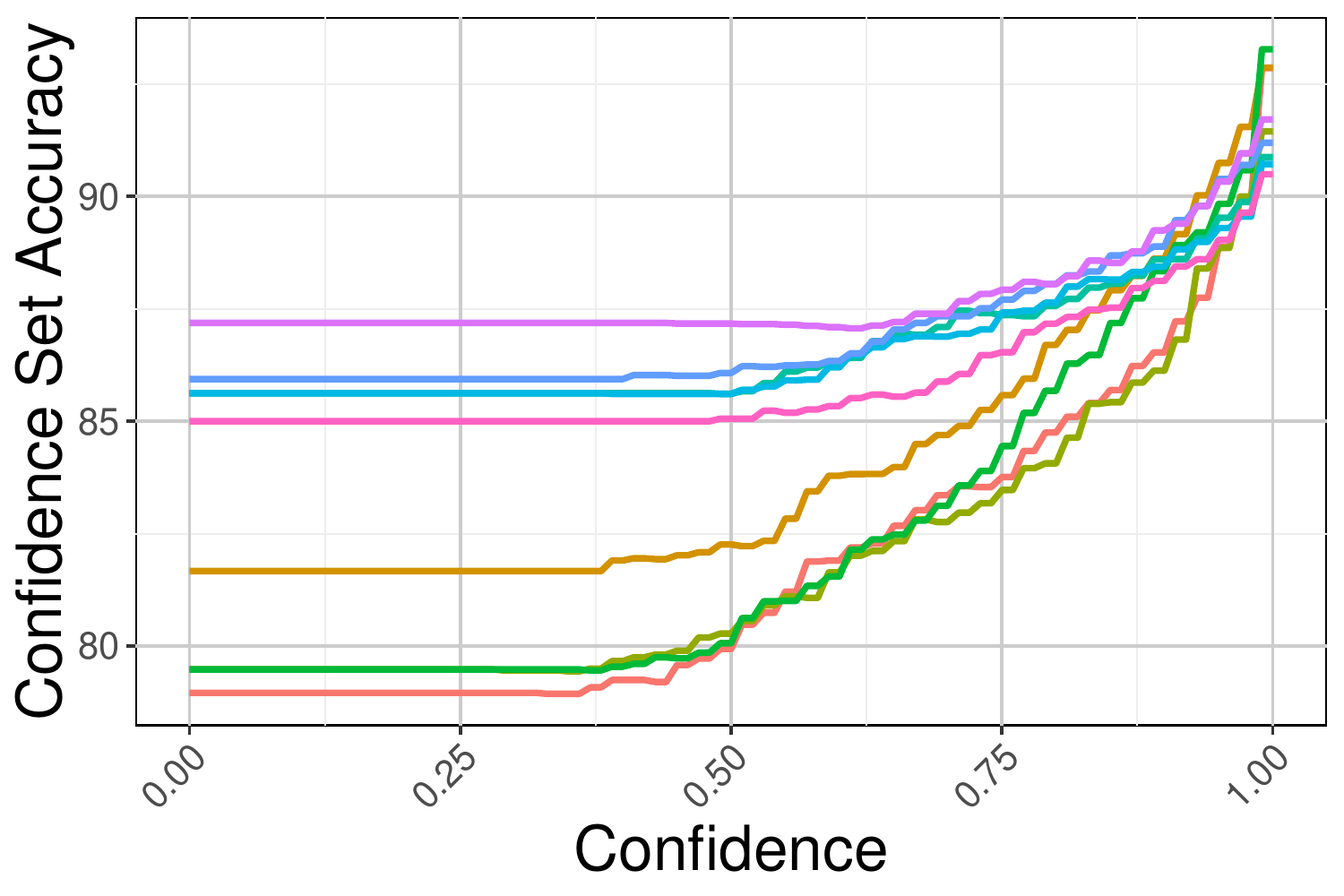}
    \end{subfigure}
    ~~
    \begin{subfigure}{0.13\textwidth}
    \centering
    \includegraphics[trim={0cm 0cm 0cm 1cm}, clip, width=\textwidth]{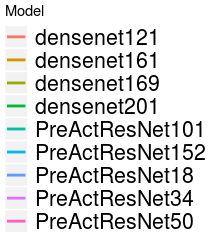}
    \end{subfigure}
    ~~
    \begin{subfigure}{0.4\textwidth}
    \includegraphics[trim={0.1cm 0.2cm 0.1cm 0.1cm}, clip, height=0.15\textheight]{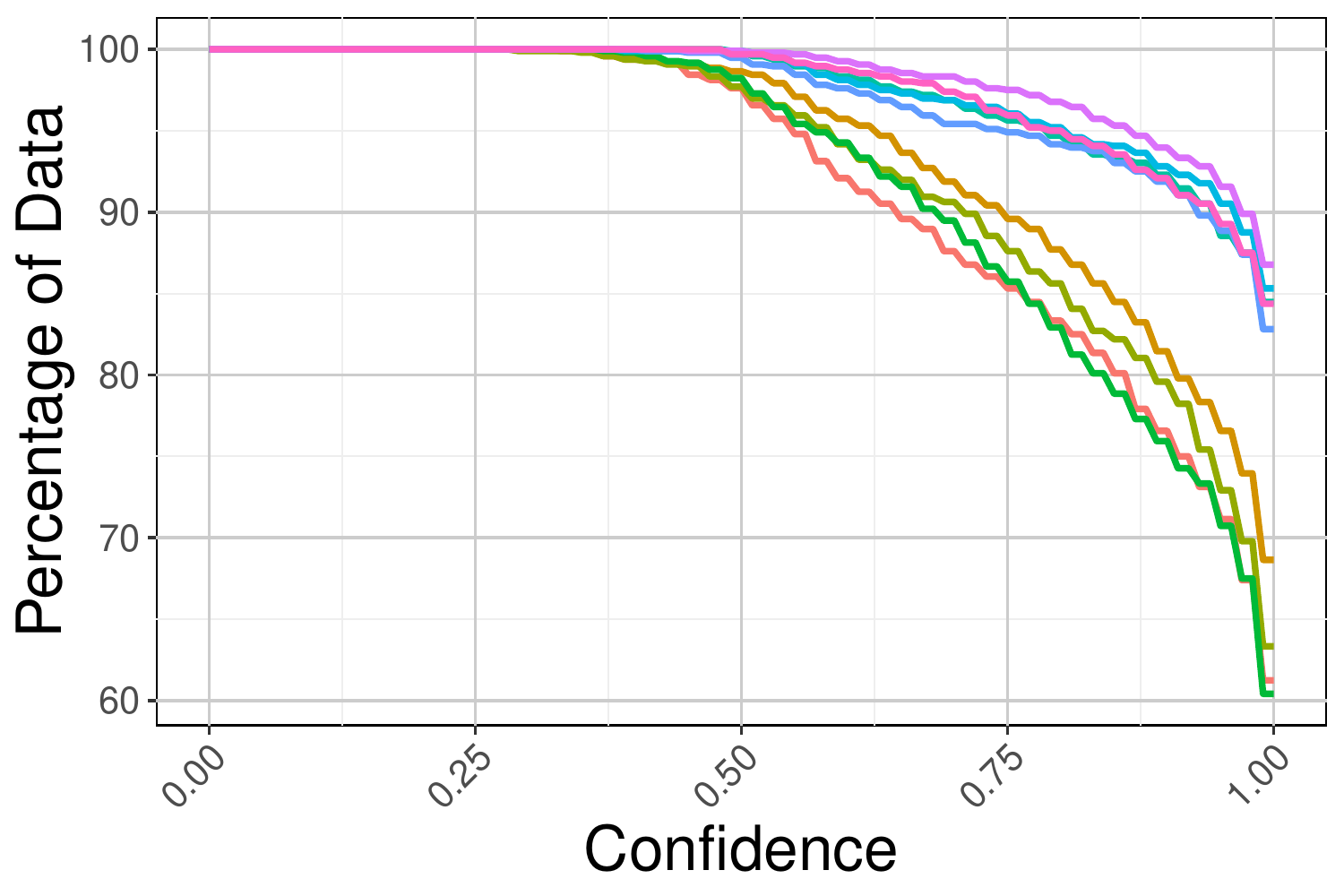}
    \end{subfigure}
    \centering
    \vspace{-10pt}
    \caption{\footnotesize  \label{fig:cifarconf} Confidence Set Accuracy and Confidence Sets on CIFAR-10 for a variety of ResNet and DenseNet models with 100 MC iterations (not previously possible). Both ResNet and Densenet achieve accuracy of more than 90\% with 100\% confidence, but ResNet is 100\% confident on almost 90\% of the data.}
\end{figure*}

\section{Experiments: Bayesian DenseNet, U-Net, and other networks}
We perform experiments on Bayesian forms of several 
architectures and show that training is feasible. 
While we expect some drop in overall accuracy compared 
to a deterministic version of the network, these experiments shed light on the benefits/
limitations of increasing MC iterations.
Since model uncertainty is important 
in scientific applications, we also study the feasibility of training such models for 
classifying high-resolution brain images from a 
public dataset. 

{\bf Setup.} %
For deterministic comparisons, we run several variations of PreActResNet \cite{he2016identity} and Densenet \cite{huang2017densely} (9 in total) on CIFAR10. For brain images, we use a simple modification of 3D U-Net \cite{ronneberger2015u}. Since our method is most relevant
when a closed form for $KL$ is unavailable, we select the approximate posterior to be a Radial distribution, where samples can be generated as: $\mu + \sigma *\frac{\xi}{||\xi||}*|r|$, where $\xi \sim MVN(0, I)$, $r \sim N(0,1)$ and the prior of our weights is a Normal distribution. This satisfies the conditions of Thm.~\ref{thm:s2}, allowing us to find a parameterization tuple that does not grow with respect to $M$: we can run $1000+$ MC iterations with almost no additional GPU memory cost compared to 1 MC iteration. Another reason for choosing the Radial distribution as our approximate posterior $q_\theta$ is because Gaussian-approximate posteriors do not perform well in high-dimensional settings \cite{farquhar2019radial}. Empirically, we find this to be the case as well; we were not able to train any models with a standard Gaussian assumption without any ad-hoc fixes such as pretraining, burn-in, or $KL$-reweighting (common in many implementations). 

{\bf Parameter settings/hardware.} All experiments used Nvidia 2080 TIs. The code was implemented in PyTorch, using the Adam optimizer \cite{kingma2014adam} for all models,
with training data augmented via standard transformations: normalization, random re-cropping, and random flipping.
All models were run for 100 epochs. 

\subsection{Time and Space Considerations}
We first examine whether our MC-reparameterization leads to  
meaningful benefits in model size or runtime. We should expect a competitive advantage in model size as the number of MC iterations grows, which may come at the cost of significantly increased runtime. To allow ease of comparison, we fix the batch size for all models to be 32.
We determine the maximum number of MC iterations able to run on a single GPU for a given model via the classical direct method.
For DenseNet-121, we are able to run 89 MC iterations, while for VGG-16 we are only able to run 5 MC iterations.

Figure \ref{fig:hists} shows a comparison of computational performance between our method and the direct approach.
\begin{inparaenum}[\bfseries (a)]
\item 
With \textbf{our} construction, we significantly \textbf{reduce model size} on the GPU (Fig.~\ref{fig:space}). For smaller models like DenseNet, for the same number of MC iterations our method uses less than 25\% of GPU memory, which allows for a significant \textbf{increase in batch size}. Since the size of the computation graph in our construction is independent of $M$, for the memory used in Fig.~\ref{fig:space} we are able to run for $M=1000$ or more.
\item The significant reduction of model size on the GPU results in a reduction of training time per batch, up to $5\times$ (Figure~\ref{fig:Tmax}); the generated computation graph has fewer parameters (nodes on the path) during backpropagation.
\end{inparaenum}

\subsection{Prediction confidence/accuracy and how many MC iterations?}
For our next set of experiments, we run a Bayesian version of PreActResNet and DenseNet with 100 MC iterations, which is feasible.
\begin{compactenum}[\bfseries (a)]
\item We evaluate the accuracy concurrently with the confidence of the prediction, offered directly by the model.
We expect that the model has a higher accuracy 
for those samples where it highly confident. This is indeed the case -- Figure \ref{fig:cifarconf} shows the accuracy for varying levels of confidence over the entire validation set for a number of models. At high confidence levels, all models perform well, competing strongly with state of the art results. Additionally, we observe the proportion of data for which the model is confident is large (Figure \ref{fig:cifarconf} right). We can see that Bayesian model is at least 75\% confident on 85\%--95\% of data.  
\item One issue in Bayesian networks is evaluating the expected drop in accuracy (compared to its deterministic versions), a behavior common in both shallow and deep models \cite{wenzel2020good}. Figure \ref{fig:cifarconf} (left) reassures us that the drop in performance for a number of widely used architectures is not that significant even when the model is not confident. 
\item To understand the effect of increasing the number of MC iterations,
we run replications of experiment on ResNet-50 for 3 different number of MC iterations, Figure \ref{fig:cifarmc_adni_per_bars}(left):  1 iteration (black), 17 iterations -- maximum possible on GPU with the traditional method -- (blue), and 100 iterations (red) possible to run due to our method. In all cases, as the threshold increases, model confidence increases and as expected, the accuracy does as well. However, we see that training with 100 MC iterations, consistently provides higher accuracy for the entire range of confidences. In contrast, with 1 MC iteration, accuracy has higher variance for the non-confident set. 
\end{compactenum}

\begin{figure}[!b]
    \centering
    \includegraphics[trim={0.3cm 0cm 0cm 0cm}, clip, width=0.49\columnwidth]{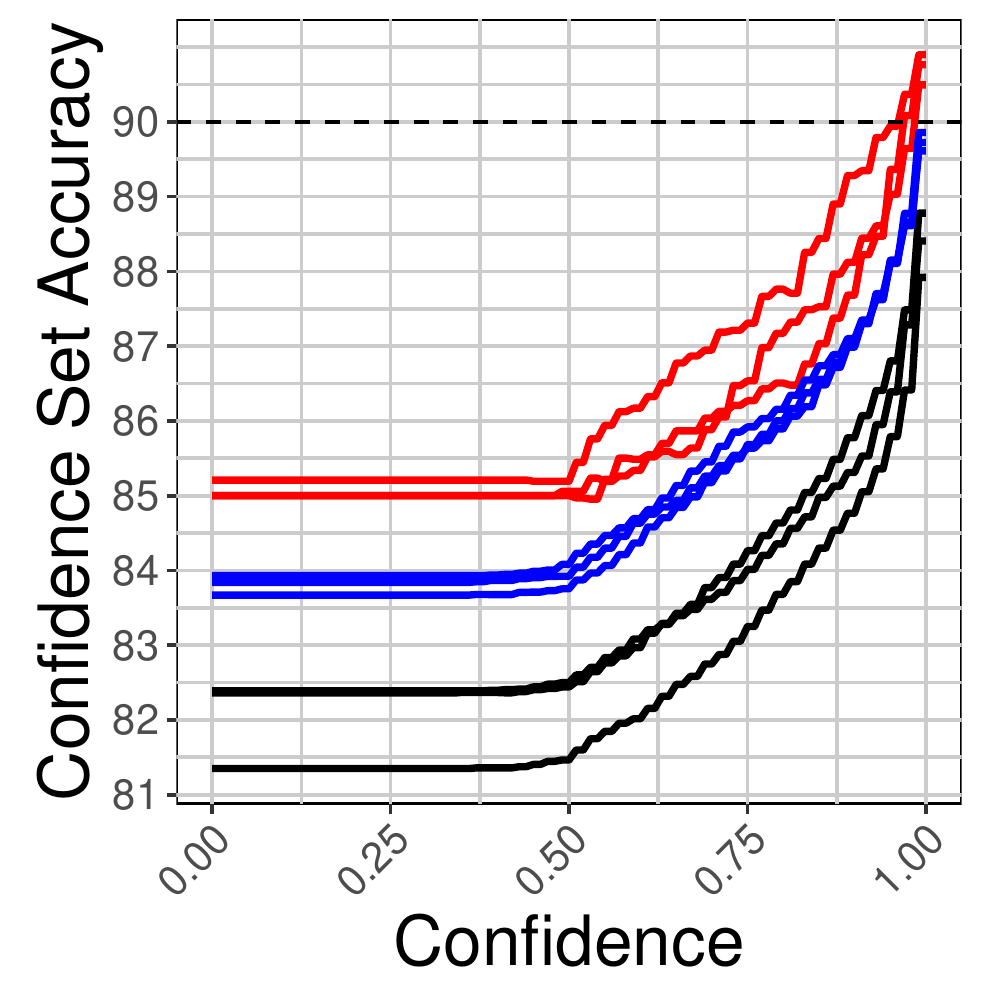}
    \includegraphics[trim={0.2cm 0cm 0.2cm 0cm}, clip, width=0.49\columnwidth]{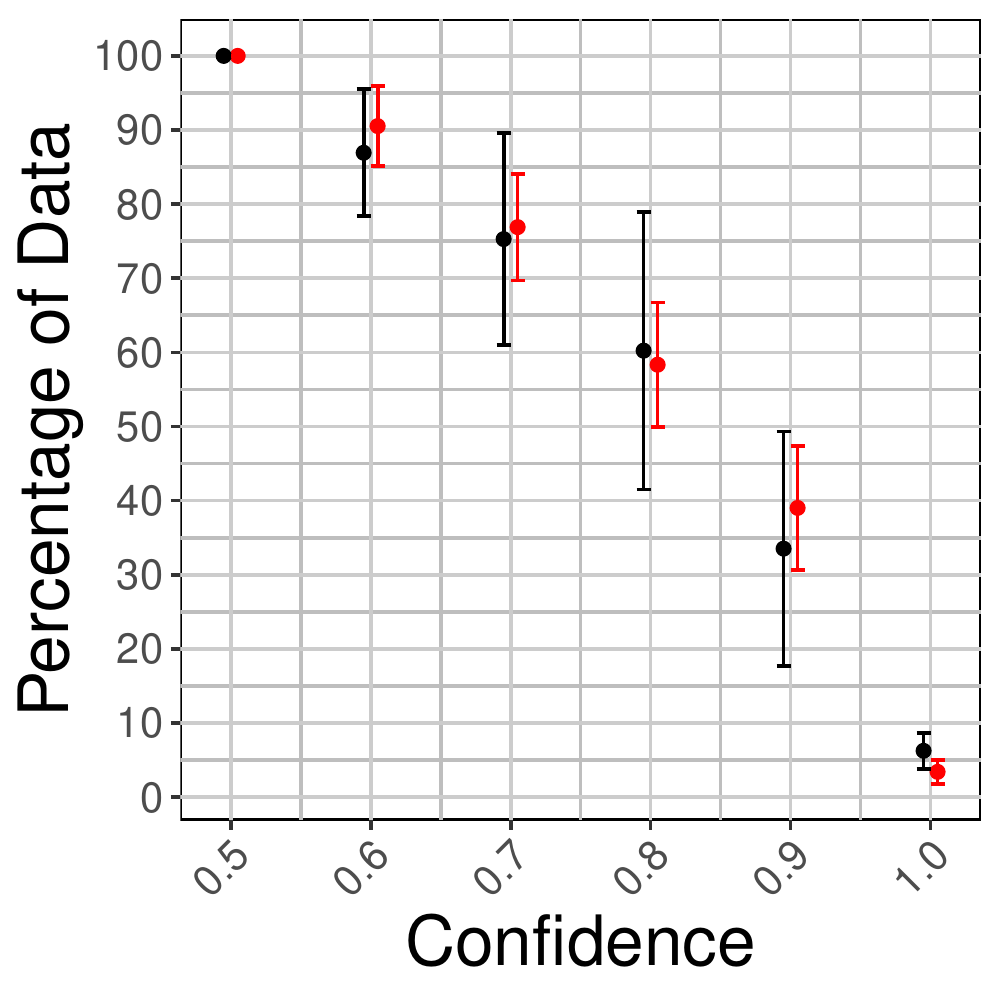}
    \caption{\footnotesize {\bf (left)} Replicated Confidence Set Accuracy on CIFAR-10 for Resnet-50 with different number of MC iterations: 1 (black), 17 (blue, maximum allowed on GPU with standard method) and 100 (red). With $M=100$ the accuracy is higher for any confidence. {\bf (right)} Distributions of Confidence Set Size for a number of replications, with 1 MC iteration (black) and 100 MC iterations (red). With 100 MC iterations variance is smaller.\label{fig:cifarmc_adni_per_bars}}
\end{figure}

\begin{table*}[tbh]
\centering
\begin{tabularx}{0.88\textwidth}{*{7}{c}}
  \specialrule{1pt}{1pt}{0pt}
 \toprule
         \ & \multicolumn{6}{c}{Confidence} \\
         \cmidrule(r){2-7}
 \ & $0.5$ & $0.6$ & $0.7$ & $0.8$ & $0.9$ & $1$\\ 
  \midrule

 $m=1$ & $63.07 \pm 1.47$ & $62.14 \pm 1.59$ & $63.01 \pm 0.64$ & $64.13 \pm 4.15$ & $59.59 \pm 4.40$ & $60.71 \pm 15.15$\\ 
  $m=100$ & $64.39\pm 4.59$ & $66.23 \pm 4.19$ & $66.05 \pm 1.00$ & $67.77 \pm 4.00$ & $66.82 \pm 1.24$ & $87.50 \pm 17.68$\\ 
  
 \midrule
  $\Delta$ & $1.33$ & $4.09$ & $3.04$ & $3.64$ & $7.23$ & $26.79$\\
\bottomrule
\end{tabularx}
\caption{\footnotesize \label{tab:adni_mc} Average validation accuracy per model confidence for 2 values of MC iterations. $\Delta = A_{100} - A_1$, where $A_i$ is validation accuracy for $i$ MC iterations. With 100 MC iterations we got on average much better results, especially when prediction is highly confident.}
\vspace{-10pt}
\end{table*}

\subsection{Neuroimaging: Predictive Uncertainty in Brain Imaging Analysis}

While we demonstrated advantages of 
our reparameterization in traditional image classification settings and benchmarks -- mostly as a proof of feasibility
 -- 
a real need for BNNs is in scientific/biomedical domains: 
where high confidence and accurate predictions may inform diagnosis/treatment. 
To evaluate applicability, we focus on a learning task with brain imaging data. 

{\bf Data.} 
Data used in our experiments were obtained from the 
Alzheimer's Disease Neuroimaging Initiative (ADNI). 
As such, the investigators within the ADNI contributed to the design and implementation of ADNI and/or provided data but did not participate in analysis or writing of this report. A complete listing of ADNI investigators can be found in \cite{adni:authors}. The primary goal of ADNI has been to test whether serial magnetic resonance imaging (MRI), positron emission tomography (PET), other biological markers, and clinical and neuropsychological assessment can be combined to measure the progression of mild cognitive impairment (MCI) and early Alzheimer's disease (AD). For up-to-date information, see \cite{adni:web}.
Classifying healthy and diseased individuals via their MR images, similar to ADNI, is common in the literature  
However, over-fitting 
when using deep models 
remains an issue for two reasons: small dataset size and a large feature space. Here, we look at a specific setting where we have $388$ individuals with pre-processed MR images of size $105 \times 127 \times 105$. {\em Preprocessing.} All MR images were registered to MNI space using SPM12 with default settings.

\begin{figure}[b]
    \centering
\begin{lstlisting}[language=Python,
        commentstyle=\itshape,
        basicstyle=\ttfamily\small,
        frame=lines,
        breakatwhitespace=false,         
        breaklines=true,                 
        keepspaces=true,                 
        showspaces=false,                
        showstringspaces=false,
        showtabs=false,                  
        tabsize=2]
Conv3D_Block(1, 16)
MaxPool3d((3,3,3))
Conv3D_Block(16, 32, stride=1)
MaxPool3d((2,2,2))
Conv3D_Block(32, 64, stride=1)
MaxPool3d((2,2,2))
Conv3D_Block(64, 128, stride=3),
MaxPool3d((2,2,2))
Conv3D_Block(128, 256, stride=3)
Linear(256, 2)    
\end{lstlisting}
    \caption{\footnotesize Structure of the model we used for ADNI classification.}
    \label{fig:unet}
\end{figure}

{\bf Network.} We use a slightly modified version of the encoder from an off-the-shelf 3D U-Net architecture \cite{ronneberger2015u}, demonstrated in Figure~\ref{fig:unet}, to learn a classifier for cognitively normal (CN) and Alzheimer's Disease (AD) subjects. We note that while this architecture is not competitive with those which achieve state-of-the-art classification accuracy on ADNI, our aim here is to demonstrate feasibility of 
training deep Bayesian models in this setting and evaluate the value of accurate confidence estimation.

We train the model on $300$ individuals, and validate on the remaining $88$. Additional experimental details can be found in the appendix. Since the input to the network is a mini-batch of high dimensional images, when we take into account the memory already needed by a deterministic model, we already reach the limits of the GPU memory. While we cannot perform more than 1 MC iteration with the standard method, we can successfully perform more than 100 with our scheme.
We evaluate the consistency of performance with several runs of training when 
we are allowed to use 1 versus 100 MC iterations.
\textbf{(a)} Table \ref{tab:adni_mc} shows the average validation accuracy for the choice of MC iterations and their difference. We see that for every confidence threshold, training with 100 MC iterations provides higher accuracy on average.
This is especially noticeable on a high confident set, where the difference approaches 26.7\%. \textbf{(b)} In addition to accuracy, it is important to understand 
how consistent the estimation is.
Figure \ref{fig:cifarmc_adni_per_bars} (right) demonstrates the distribution of the size of confident set.
While on average, the size of the ``confident set'' of the two models is similar,
the variance is significantly smaller when we use a larger number of MC iterations, consistent with our hypothesis in \S\ref{sec:intro}. 
In cases where this confidence needs to be measured as accurately as possible, one obtains benefits over a single MC iteration.

\section{Conclusions}
While a broad variety of neural network architectures are used in 
vision and medical imaging, successfully training them in a Bayesian setting poses challenges. 
Part of the reason has to do with distributional assumptions.
Moving to a broader class of distributions involves MC estimations but direct implementations pose serious demands on memory and run-time. 
In this work, we identify that different computation graphs can be constructed for different parameterizations of the target function. Specifically when one is attempting a Monte Carlo approximation, these graphs can grow linearly with the number of MC iterations needed, which is undesirable. By directly characterizing the parameterizations that lead to different graphs, we analyze situations where it is possible for graphs to be constructed independent of this sampling rate (number of MC iterations).
Evaluating our parameterization empirically, we find that it is  feasible to run a large number of MC iterations for large networks in vision, with 
a nominal drop in accuracy (compared to deterministic versions). The code 
is available at 
\url{https://github.com/vsingh-group/mcrepar}.

\section*{Acknowledgments}

This work was supported in part by 
NIH grants RF1 AG059312 and RF1	AG062336. 
RRM was supported in part by 
NIH Bio-Data Science Training Program 
T32	LM012413 grant to the University of Wisconsin Madison.

\clearpage

\bibliography{refs}

\newpage
\appendix

\twocolumn[{%
 \centering
 \LARGE APPENDIX
 \vspace{10pt}
}]

In this document we provide more details about experiments, introduce our interactive application to analyze the quality of $KL$ approximation, and give examples of computation graphs of $KL$ terms for different distributions, in comparison between direct implementation and our parameterization technique. Proofs of the results in the main paper can also be found towards the end of the document.

\section{Experiments Details}
A working version of the code is attached in the directory ``main\_code''. In our experiments, we follow the re-weighting scheme for mini-batches proposed by \cite{graves2011practical} as $\beta=\frac{1}{B}$, where $B$ is number of mini-batches.
For all experiments with VGG, we decrease the number of nodes by half in the last dense layers to fit the Bayesian model on a single GPU. We choose an exponential family with 2 parameters, which results in doubling the number of parameters compared to the original networks.

\section{Making your own Bayesian network, using our API}
Figure~\ref{code:own_network} provides an example of how to implement your own Bayesian neural network with our API. 
\begin{figure*}[h]
\begin{lstlisting}[language=Python,
        commentstyle=\itshape,
        basicstyle=\ttfamily\smaller,
        frame=lines,
        breakatwhitespace=false,         
        breaklines=true,                 
        keepspaces=true,                 
        showspaces=false,                
        showstringspaces=false,
        showtabs=false,                  
        tabsize=2]
import bayes_layers as bl
class AlexNet(nn.Module):
    def __init__(self, num_classes, in_channels, 
                 **bayes_args):
        super(AlexNet, self).__init__()
        self.conv1 = bl.Conv2d(in_channels, 64,
                               kernel_size=11, stride=4,
                               padding=5,
                               **bayes_args)
        self.classifier = bl.Linear(1*1*128,
                                    num_classes,
                                    **bayes_args)
        ...
        
    def forward(self, x):
        kl = 0
        for layer in self.layers:
            tmp = layer(x)
            if isinstance(tmp, tuple):
                x, kl_ = tmp
                kl += kl_
            else:
                x = tmp

        x = x.view(x.size(0), -1)
        logits, _kl = self.classifier.forward(x)
        kl += _kl
\end{lstlisting}
\caption{An example of how to implement your own version of the bayesian neural network, using our API. We need to import bayesian layers module, which provides new functional for convolution1d, convolution2d, convolution3d and fully connected layers. In addition we need to redefine forward function, as shown.}
\label{code:own_network}
\end{figure*}

\section{Computation graphs}
In this section we demonstrate computation graphs corresponding to the MC estimation of one of the expectation terms in $KL$ (sometimes it is called $KL$ cross-entropy): $E_{Q_\theta}\log p(w)$, where $Q_\theta$ is the approximate posterior distribution with pdf $q_\theta$, and $p(w)$ is the prior distribution on $w$. We compare the size of computation graphs for different numbers of MC iterations for a direct implementation and our reparameterization method. 

\subsection{Approximate posterior: Radial($\mu$, $\sigma^2$); Prior: Gaussian(0, 1)}
For the following setup there is no closed form solution for $KL$ term, and approximation with MC sampling is required.
Samples from approximate posterior $Q_\theta$ can be generated as $\mu + \sigma\xi$, where $\xi = \frac{w}{|w|}r$, $w \sim \text{MVN}(0, I)$, $r\sim N(0, 1)$. Assumption about the prior gives us the following term to estimate $E_{Q_\theta}\log (\exp(-w^2))$. Figure \ref{fig:rad_norm_cg} shows computation graphs which correspond to different number of MC iterations. We can see that with the direct implementation, the size is proportional to the number of MC iterations, while our approach constructs a graph whose size is independent of the number of MC iterations.

\begin{figure*}[h]
    \centering
    
    \begin{subfigure}{0.4\textwidth}
    \centering
    \includegraphics[width=0.4\linewidth]{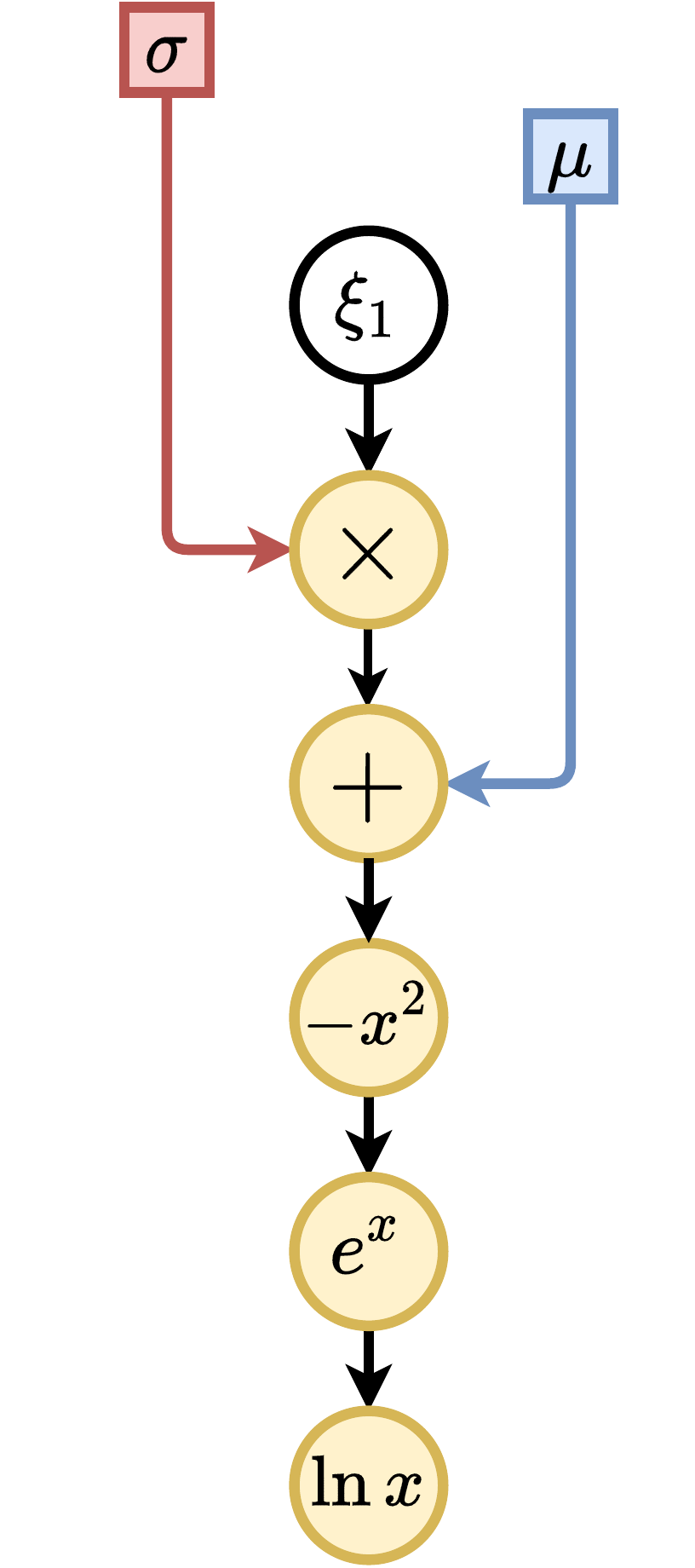}
    \vspace{3pt}
    \caption{$M = 1$}
    \end{subfigure}
    \begin{subfigure}{0.4\textwidth}
    \centering
    \includegraphics[width=0.85\linewidth]{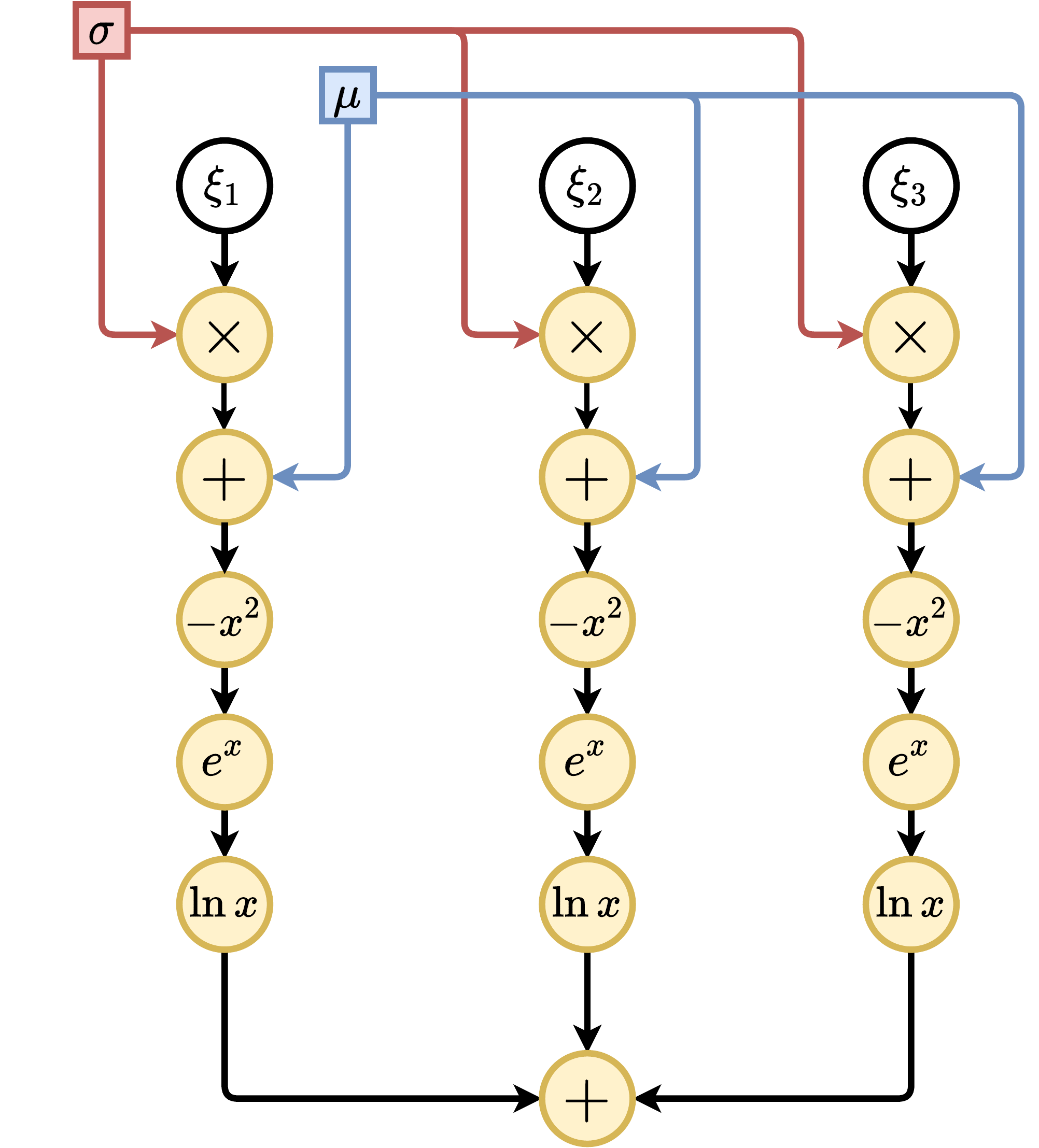}
    \caption{$M = 3$}
    \end{subfigure}
    
    \begin{subfigure}{0.4\textwidth}
    \centering
    \includegraphics[width=0.75\linewidth]{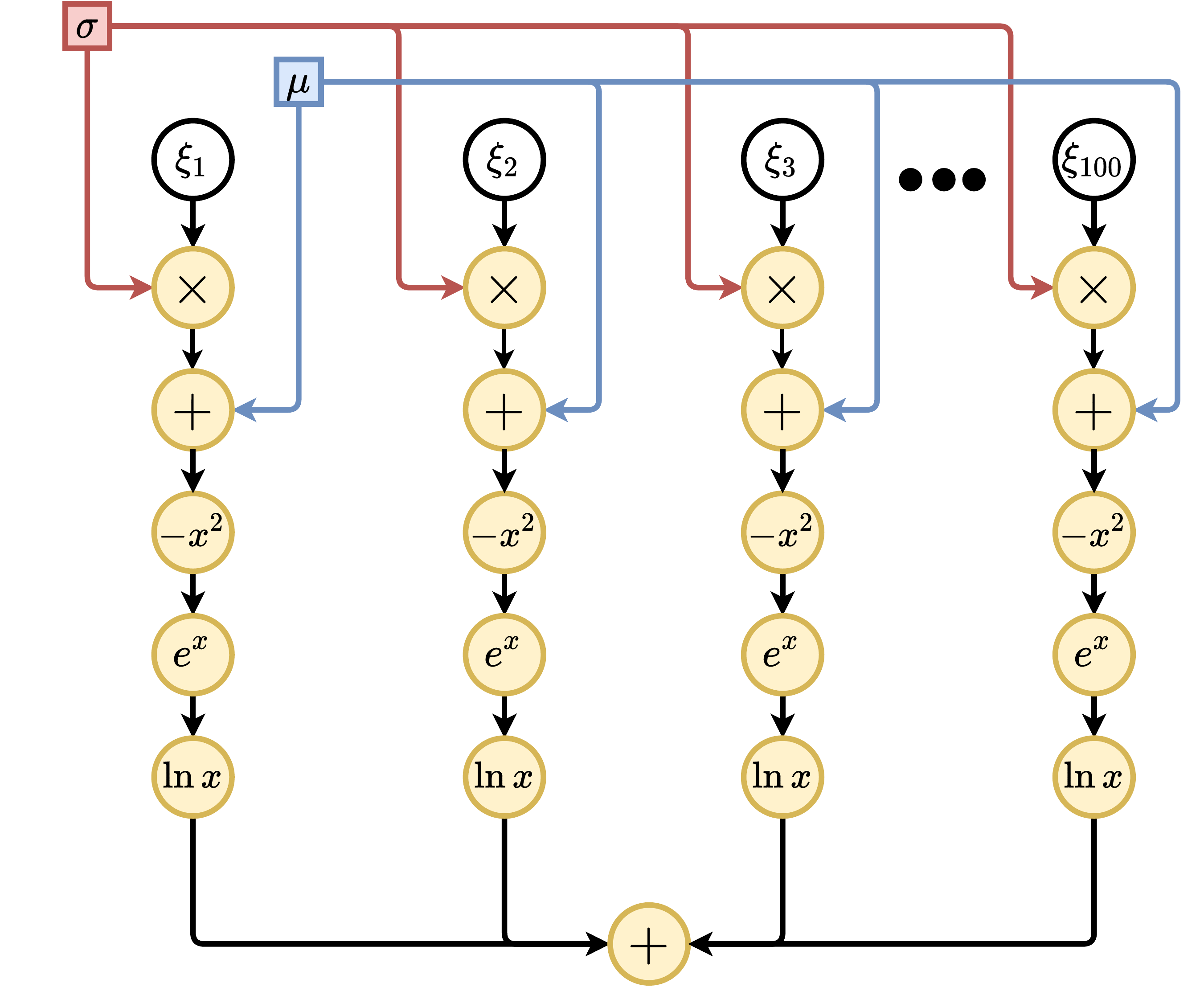}
    \vspace{2pt}
    \caption{$M = 100$}
    \end{subfigure}
    \begin{subfigure}{0.4\textwidth}
    \centering
    \includegraphics[width=\linewidth]{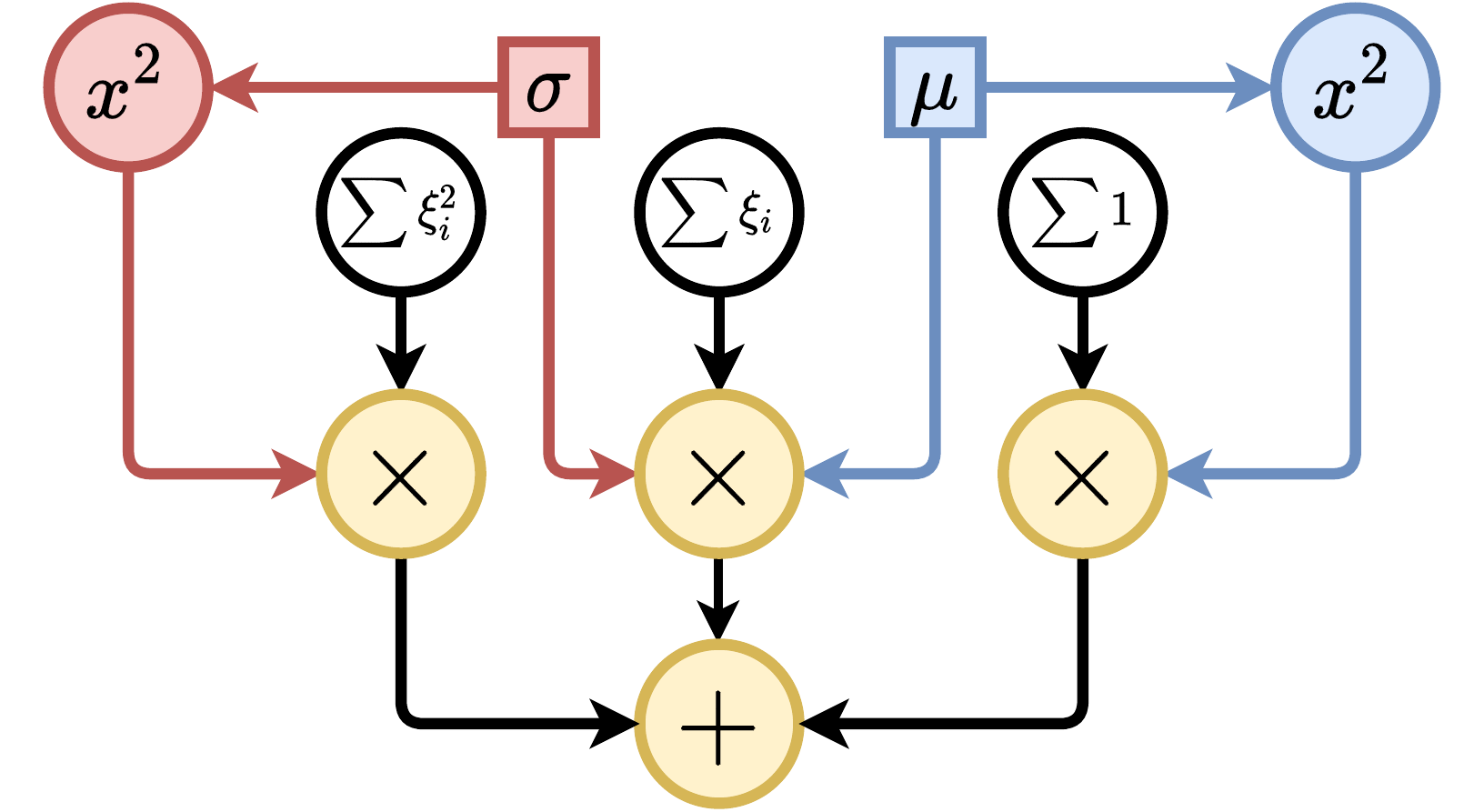}
    \vspace{-2pt}
    \caption{$M = \cdots$}
    \end{subfigure}
    
    \caption{Computation graphs corresponding to MC approximation of $KL$ term. (a)-(c) direct implementation, (d) - our method for any number of MC iterations}
    \label{fig:rad_norm_cg}
\end{figure*}

\section{Interactive application to evaluate MC approximation of $KL$ terms}
\subsection{Example}
To demonstrate the relationship between MC estimation quality of the $KL$ term and number of MC iterations, we provide an interactive Shiny application in this supplement. If one assumes that approximate posterior and prior are Gaussian distributed, in this setting, we can calculate the ``ground truth'' $KL$. The main purpose here is to evaluate MC approximation by calculating the sample variance. The main parameters which will influence the quality of the MC approximation are: number of MC iterations, choice of variance of the approximate posterior distribution, and the size of the model (i.e. number of parameters in Neural Network). All these parameters can be set in our application. In addition, we provide an option to plot results in log-scale (for a better comparison). Additionally, graphs can be zoomed in, by highlighting a selected zone on the plot and double clicking (to zoom out, double click again).

Sample runs of our application (if the reader cannot run the tool) appear in Figure \ref{fig:kl_sim}. We fix variance equal to $10^{-4}$, number of MC iterations up to $10^3$, number of simulations per MC equals to $10^2$ (to smooth the variance estimation). Then we compare the variance of 4 different models with number of parameters: $10^2, 10^4, 10^6$ and $10^8$, and plot the results on the bottom figure with a log-scale. We see that despite the small variance, with growing size of the model it is necessary to increase number of MC iterations to decrease the variance of the MC estimator for $KL$. 

\begin{figure}[h]
    \centering
    \begin{subfigure}[t]{0.19\columnwidth}
    \vskip 0pt
    \includegraphics[width=\textwidth]{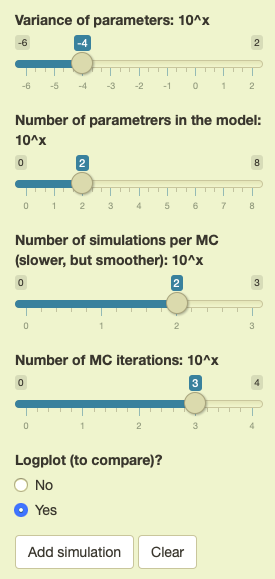}
    \end{subfigure}%
    \begin{subfigure}[t]{0.5\columnwidth}
    \vskip 0pt
    \includegraphics[width=\textwidth]{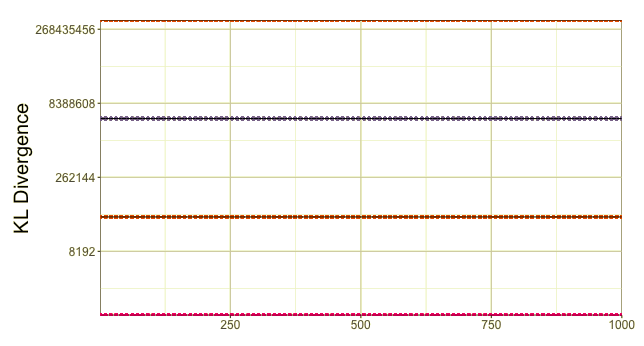}
    \includegraphics[width=\textwidth]{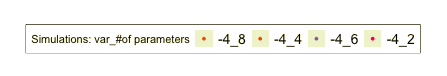}
    \includegraphics[width=\textwidth]{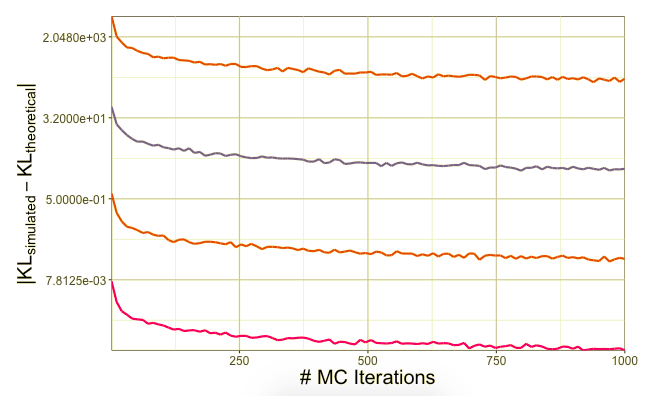}
    \end{subfigure}

    \caption{Demonstration of interactive Shiny application to evaluate the performance of MC estimation.}
    \label{fig:kl_sim}
\end{figure}

\subsection{Installation}
Files are located in the directory "interactive\_app".
There are two ways to prepare our application for execution, both of them are handled by the integer parameter  ``method":
\begin{verbatim}
install_shiny_mc_repar.sh method
\end{verbatim}
``method" can be one of 2 values: 1 or 2 
\begin{enumerate}
 \item If you have R installed, then the following packages are required to be installed and their installation is handled automatically:
 \begin{verbatim}
 c("shiny", "RColorBrewer",
   "dplyr", "ggplot2", "latex2exp") 
 \end{verbatim}
 \item If you would like to avoid installing R, but you have Docker installed, the script creates a Docker image with all necessary dependencies. It will take about 1.8GB of space and can be checked by running 
\begin{verbatim} 
 docker images
\end{verbatim}
\end{enumerate}

\subsection{Execution}
After installation is successful, to run the application,  execute the following script with a new ``method" parameter:
\begin{verbatim}
run_shiny_app.sh method
\end{verbatim}

``method" can be one of 2 values: 1 and 2

\begin{enumerate}
    \item [1. (R route)] It will start app automatically in a browser.
    \item [2. (Docker route)] In this case script starts shiny application and provides a local address, which you can access through the browser (this is address on your local machine, not external: http://localhost:3838)
\end{enumerate}

\subsection{Additional information about version of packages, which were used to run application}
\begin{verbatim}
> sessionInfo()
R version 3.4.2 (2017-09-28)
Platform: x86_64-apple-darwin15.6.0
Running under: macOS  10.14.6

attached packages:
latex2exp_0.4.0
ggplot2_2.2.1
dplyr_0.7.4
RColorBrewer_1.1-2
shiny_1.0.5       
\end{verbatim}

\section{Theory: Proofs and Clarifications}

\subsection{Proofs of Main Results}

\begin{theorem}
If $W(\theta, \xi) = \eta(\theta)T(\xi)$ ($S = 1$), then there exists a parametrization tuple $P$ with $d_P = 1$ for following functions $g(w)$: $w^k$, $\log(w)$, and $\frac{1}{w^k}$.
\end{theorem}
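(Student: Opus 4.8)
The plan is to exploit the multiplicative (scaling) structure $W(\theta,\xi) = \eta(\theta) T(\xi)$ directly: because $\theta$ enters only through the scalar factor $\eta(\theta)$, applying any of the three candidate functions $g$ to $w_i = \eta(\theta) T(\xi_i)$ should separate the $\theta$-dependence into a single scalar, leaving the dependence on $\xi_1,\ldots,\xi_M$ as a $\theta$-free quantity that can be aggregated over the samples. I would treat the three forms of $g$ in turn and, in each case, simply exhibit a tuple $P = (G, n, t)$ and verify that $n(\theta)$ is scalar-valued, so that $d_P = 1$. The $\tfrac{1}{M}$ prefactor of the MC estimator is irrelevant to the graph size and can be absorbed into $G$.

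For the power cases $g(w) = w^k$ and $g(w) = 1/w^k$, I would use that $\big(\eta(\theta)T(\xi_i)\big)^{\pm k} = \eta(\theta)^{\pm k}\, T(\xi_i)^{\pm k}$, so that $\sum_{i=1}^M g(w_i) = \eta(\theta)^{\pm k}\sum_{i=1}^M T(\xi_i)^{\pm k}$. Here the entire $\theta$-dependence is carried by the single scalar $\eta(\theta)^{\pm k}$, while $\sum_i T(\xi_i)^{\pm k}$ depends only on the ancillary samples. Reading off $n(\theta) = \eta(\theta)^{\pm k}$ (dimension one), $t(\xi_1,\ldots,\xi_M) = \sum_i T(\xi_i)^{\pm k}$, and $G$ the identity yields a valid parameterization tuple with $d_P = 1$.

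For $g(w) = \log(w)$ the multiplicative structure becomes additive: $\log w_i = \log \eta(\theta) + \log T(\xi_i)$, hence $\sum_{i=1}^M \log w_i = M\log\eta(\theta) + \sum_{i=1}^M \log T(\xi_i)$. The $\theta$-dependent part is again a single scalar $\log\eta(\theta)$, so I would take $n(\theta) = \log\eta(\theta)$ and $t = M$, folding the residual term $\sum_i \log T(\xi_i)$ into $G$ as an additive offset.

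The main subtlety — and the only step that is not a one-line factorization — is the $\log$ case: I must argue that the $\xi$-only additive term $\sum_i \log T(\xi_i)$ does not inflate $d_P$. This follows from the computation-graph convention stated just before Def.~\ref{def:tuple}, under which any subgraph containing no node requiring a gradient collapses to a single constant output node; since this term is constant in $\theta$, it requires no gradient and is not counted among the gradient-bearing components of $n \circ t$. With that convention in hand, the only gradient-requiring node in all three cases is the single scalar built from $\eta(\theta)$, establishing $d_P = 1$ uniformly and independently of $M$.
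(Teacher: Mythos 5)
Your proof is correct and takes essentially the same route as the paper's: in each case you factor the $\theta$-dependence of $\sum_{i=1}^M g(w_i)$ into a single scalar built from $\eta(\theta)$, aggregate the $\xi$-dependence across samples, and read off a tuple with $d_P=1$. Your explicit treatment of the additive $\xi$-only term $\sum_i \log T(\xi_i)$ in the $\log$ case (folding it into $G$ via the convention that gradient-free subgraphs collapse to a single node) is, if anything, more careful than the paper's own write-up, which simply labels that sum as $n(\theta)t(\xi)$.
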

\begin{proof}
First, we are going to show that for $g(w)$: $w^k$, $\log(w)$, and $\frac{1}{w^k}$ there exists a parametrization tuple $P$ of a specific form, and then we show that for these tuples $d_P = 1$.

\begin{compactenum}[{\bf Case} (1).]
\item {\bf $\mathbf{g(w) = w^k}$}
$$
    \begin{aligned}
        \sum_{i=1}^M g(w_i) &=  \sum_{i=1}^M w_i^k
        = \sum_{i=1}^M (\eta(\theta)T(\xi_i))^k \\
        &= \eta^k(\theta)\sum_{i=1}^MT^k(\xi_i)
        = \eta^k(\theta)T^k(\undtil{\xi})\\
        &=n(\theta)t(\xi)
    \end{aligned}
$$
\item {\bf $\mathbf{g(w) = \log(w)}$}
$$
    \begin{aligned}
        \sum_{i=1}^M g(w_i) &=  \sum_{i=1}^M \log(w_i)
        = \sum_{i=1}^M \log(\eta(\theta)T(\xi_i)) \\
        &= M\log(\eta(\theta)) + \sum_{i=1}^M\log(T(\xi_i))\\
        &= M\log(\eta(\theta)) +  \log(T(\undtil{\xi}))\\
        &=n(\theta)t(\xi)
    \end{aligned}
$$
\item {\bf $\mathbf{g(w) = \frac{1}{w^k}}$}
$$
    \begin{aligned}
        \sum_{i=1}^M g(w_i) &=  \sum_{i=1}^M \frac{1}{w_i^k}
        = \sum_{i=1}^M \frac{1}{(\eta(\theta)T(\xi_i))^k} \\
        &= \frac{1}{\eta^k(\theta)}\sum_{i=1}^M\frac{1}{T^k(\xi_i)}\\
        &= n(\theta)t(\xi)
    \end{aligned}
$$
\end{compactenum}
We see that for all $g(w)$ from the list, we identify a parametrization tuple $P=(G(n, t),n(\theta),t(\xi))$, such that $G(n,t) = nt$ and $n(\theta)$, $t(\theta)$ depends on choice of $g(w)$. Clearly, for all these parametrization tuples $P$, $d_P = 1$.
\end{proof}

\begin{theorem}%\label{thm:s2}
If $W(\theta, \xi)= \sum_{s=1}^S\eta_s(\theta) T_s(\xi)$, and $g(w) = w^k$, then there exists a parametrization tuple $P$ with $d_P = \binom{k + S -1}{S - 1}$.
\end{theorem}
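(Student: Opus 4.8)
The plan is to expand the $k$-th power by the multinomial theorem and then exploit the product structure $\eta_s(\theta)T_s(\xi)$ to split each resulting monomial into a factor depending only on $\theta$ and a factor depending only on the auxiliary samples; the dimension count then reduces to a stars-and-bars computation. First, for each Monte Carlo sample $i$ I would expand
$$
w_i^k = \left(\sum_{s=1}^S \eta_s(\theta) T_s(\xi_i)\right)^k
= \sum_{\substack{a_1+\cdots+a_S=k\\ a_s \geq 0}} \binom{k}{a_1,\ldots,a_S}\left(\prod_{s=1}^S \eta_s(\theta)^{a_s}\right)\left(\prod_{s=1}^S T_s(\xi_i)^{a_s}\right),
$$
where the sum ranges over all nonnegative integer multi-indices $\mathbf{a}=(a_1,\ldots,a_S)$ summing to $k$, and $\binom{k}{a_1,\ldots,a_S}$ is the multinomial coefficient.

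The second and key step is the separation. Summing over the $M$ samples and interchanging the two finite sums gives
$$
\sum_{i=1}^M w_i^k = \sum_{\mathbf{a}} \binom{k}{a_1,\ldots,a_S}\underbrace{\left(\prod_{s=1}^S \eta_s(\theta)^{a_s}\right)}_{n_{\mathbf{a}}(\theta)}\underbrace{\left(\sum_{i=1}^M \prod_{s=1}^S T_s(\xi_i)^{a_s}\right)}_{t_{\mathbf{a}}(\xi)}.
$$
The entire dependence on $M$ has now been pushed inside $t_{\mathbf{a}}(\xi)$, which involves only the ancillary variables and therefore requires no gradient with respect to $\theta$. This exhibits the parameterization tuple: let $n(\theta)$ be the vector with components $n_{\mathbf{a}}(\theta)$, let $t(\xi)$ be the vector with components $t_{\mathbf{a}}(\xi)$, and let $G$ multiply by the multinomial coefficients and sum, so that $\sum_{i=1}^M w_i^k = G\big(n(\theta)\circ t(\xi)\big)$ as required by Definition~\ref{def:tuple}.

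The final step is to count $d_P$, the dimension of $n\circ t$, which by construction equals the number of distinct $\theta$-monomials $n_{\mathbf{a}}$, i.e.\ the number of nonnegative integer solutions of $a_1+\cdots+a_S=k$. By the standard stars-and-bars argument this count is $\binom{k+S-1}{S-1}$, matching the claim, and it is manifestly independent of $M$.

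The proof is largely mechanical, so I do not expect a hard inequality to be the bottleneck. The only place that genuinely needs care is the separation step: I must verify that expanding the $k$-th power of a sum of $S$ products $\eta_s(\theta)T_s(\xi)$ yields monomials that each factor cleanly as a pure-$\theta$ term times a pure-$\xi$ term, so that absorbing the sum over the $M$ samples into $t_{\mathbf{a}}$ never multiplies the number of $\theta$-dependent nodes. The complementary subtlety is the counting itself: collecting like monomials must give exactly the stars-and-bars value $\binom{k+S-1}{S-1}$, rather than the naive $S^k$ obtained from the ordered expansion, and one should confirm that distinct multi-indices $\mathbf{a}$ indeed produce functionally distinct gradient-bearing nodes so the bound is tight for generic $\eta_s$.
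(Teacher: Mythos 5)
Your proof is correct and follows essentially the same route as the paper's: both expand $\left(\sum_{s=1}^S \eta_s(\theta)T_s(\xi_i)\right)^k$ as a homogeneous polynomial in the $T_s(\xi_i)$ with $\theta$-dependent coefficients, absorb the sum over the $M$ samples into the $\xi$-dependent factors, and count the surviving $\theta$-monomials by stars and bars to obtain $\binom{k+S-1}{S-1}$. Your version is simply more explicit (writing out the multinomial coefficients and multi-indices), where the paper states the same separation in terms of a generic polynomial $p_k^S(T(\undtil{\xi});\eta(\theta))$.
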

\begin{proof}
Consider an MC expression $\frac{1}{M}\sum_{i=1}^Mg(w_i)$. Given assumptions on $g(w)=w^k$ and $W=\sum_{s=1}^S\eta_s(\theta) T_s(\xi)$, we get:
$$
        \sum_{i=1}^M g(w_i) =  \sum_{i=1}^M w_i^k = \sum_{i=1}^M \left(\sum_{s=1}^S\eta_s(\theta) T_s(\xi_i)\right)^k 
$$
We observe that $(\sum_{s=1}^S\eta_s(\theta) T_s(\xi_i))^k$ is a polynomial of order k with S indeterminates $T_s(\xi_i)$, and coefficients $\eta_s(\theta)$ independent of $\xi_i$. Let us denote the polynomial as $p_k^S(T(\xi_i); \eta(\theta) )$. Since coefficients are independent of $\xi_i$ for all i, then
$$
        \sum_{i=1}^M p_k^S(T(\xi_i); \eta(\theta)) = p_k^S(T(\undtil{\xi}); \eta(\theta)),
$$
where $\undtil{\xi}=(\xi_1, \ldots, \xi_M)$. 
Which results in the following parametrization tuple $P=(G, n(\theta), t(\xi))$, such that $G(n, t) = \sum_{i=1}^{d_P}n_it_i$, and $n_i$, $t_i$ are coefficients and indeterminates of the polynomial $p_k^S(T(\undtil{\xi}); \eta(\theta))$.\\
The expansion of a polynomial of order $k$ with $S$ indeterminates has $\binom{k+S-1}{S-1}$ coefficients, exactly the number of interactions $d_P$.
\end{proof}

\addtocounter{corollary}{2}
\begin{corollary}%\label{cor:poly}
If $W(\theta, \xi) = \sum_{s=1}^S\eta_s(\theta) T_s(\xi)$, and $g(w) = p_K(w)$, then there exists parametrization tuple $P$, such that for any $M$ iterations
\begin{align}%\label{form:dp_polyn}
    d_P = \binom{K+S}{S} -1.
\end{align}
\end{corollary}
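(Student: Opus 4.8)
The plan is to reduce the polynomial case to the monomial case already settled by Theorem~\ref{thm:s2}, and then count distinct monomials. First I would write the degree-$K$ polynomial in standard form $p_K(w) = \sum_{k=0}^K c_k w^k$ with constant coefficients $c_k$, so that, by linearity of the MC sum,
$$\sum_{i=1}^M g(w_i) = \sum_{i=1}^M p_K(w_i) = \sum_{k=0}^K c_k \sum_{i=1}^M w_i^k.$$
For each fixed $k$, Theorem~\ref{thm:s2} (applied with $g(w)=w^k$) shows that $\sum_{i=1}^M w_i^k$ equals a single polynomial $p_k^S(T(\undtil{\xi}); \eta(\theta))$ whose coefficients are the degree-$k$ monomials in $\eta_1(\theta),\dots,\eta_S(\theta)$ and whose indeterminates are the corresponding sums over the auxiliary samples collapsed into $\undtil{\xi}=(\xi_1,\dots,\xi_M)$; crucially this expression is already independent of $M$ in graph size.

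Next I would assemble these per-degree contributions into one parameterization tuple $P=(G,n,t)$ with $G(n,t)=\sum_j n_j t_j$, where $n(\theta)$ collects the $\theta$-monomials (each scaled by the appropriate constant $c_k$) and $t(\xi)$ collects the matching indeterminates. The key observation is that monomials arising from different degrees $k$ are distinct, so the merged node set is exactly the set of all monomials of total degree at most $K$ in $S$ variables. Counting these by degree gives $\sum_{k=0}^K \binom{k+S-1}{S-1}$, which collapses via the hockey-stick identity to $\binom{K+S}{S}$.

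Finally, the degree-zero monomial contributes only the constant term $M c_0$, which carries no dependence on $\theta$ and therefore corresponds to a node that does not require a gradient. By Definition~\ref{def:tuple}, $d_P$ counts only nodes requiring gradients with respect to $\theta$, so this single constant node is excluded, yielding $d_P = \binom{K+S}{S} - 1$.

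The main obstacle, though largely bookkeeping, is to verify two things carefully: that distinct degrees never produce coinciding monomials, so there is no overcounting when the per-degree tuples are merged, and that the only node to drop is the constant term. The combinatorial step---recognizing $\sum_{k=0}^K \binom{k+S-1}{S-1} = \binom{K+S}{S}$ as a hockey-stick sum---is the crux of matching the stated count, and the $-1$ is precisely the removal of the $\theta$-independent constant.
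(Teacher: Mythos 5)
Your proof is correct and matches the paper's intended argument: the paper treats this corollary as an immediate consequence of the expansion-counting in Theorem~\ref{thm:s2}, noting only (as you do) that the $\theta$-independent constant term $a_0$ carries no gradient and is dropped, which is exactly the $-1$. Your per-degree decomposition with the hockey-stick summation $\sum_{k=0}^{K}\binom{k+S-1}{S-1}=\binom{K+S}{S}$ is simply an explicit bookkeeping of the same count, namely the number of monomials of degree at most $K$ in the $S$ quantities $\eta_1(\theta),\ldots,\eta_S(\theta)$.
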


\textit{Note}.
We consider a polynomial $p_K(w)= \sum_{k=0}^Ka_kw^k$, such that coefficients $a_k$ do not depend on optimized parameter $\theta$. 
Since the first element $a_0$ of the polynomial $p_k(w)$ is not important for the analysis of $d_P$, we can ignore it and compute $d_P$ as presented in Eq.~\ref{form:dp_polyn}. However, if there is a need to consider $a_0$, then one only needs to add $+1$ to the Eq.~\ref{form:dp_polyn}.

\begin{lemma} \label{lemma:cut_size}
Consider a polynomial of order k with $S+1$ indeterminates
$$ \left(\sum_{s=1}^{S}\eta_s(\theta) T_s(\xi) - a\right)^{k},$$ 
where $a$ is a constant. If there $\exists j: T_j(\xi) = c = \text{const}$, then 
$$ \left(\sum_{s=1}^{S}\eta_s(\theta) T_s(\xi) - a\right)^{k}  = \left(\sum_{s=1}^S\eta^*_s(\theta) T_s(\xi)\right)^{k}$$
where  $\eta^*=(\eta_1^*, \ldots, \eta_S^*):$ 
$\forall s \neq j,~ \eta^*_s(\theta) = \eta_s(\theta)$ and $\eta^*_j(\theta) = \eta_j(\theta) - a/c$.
\end{lemma}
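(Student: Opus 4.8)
The plan is to establish the identity first for the degree-one expression inside the parentheses and then raise both sides to the $k$-th power, since identical bases yield identical $k$-th powers. The essential observation is that the constant shift $-a$ can be absorbed into the single indeterminate that is already constant, namely $T_j(\xi) = c$, so that no genuinely new indeterminate is needed.

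First I would split off the $j$-th summand,
\[
\sum_{s=1}^{S}\eta_s(\theta) T_s(\xi) - a = \sum_{s \neq j}\eta_s(\theta) T_s(\xi) + \eta_j(\theta) T_j(\xi) - a,
\]
and then use the hypothesis $T_j(\xi) = c$ to rewrite the constant as $a = (a/c)\,c = (a/c)\,T_j(\xi)$. This step is legitimate precisely when $c \neq 0$; in the location--scale application of interest $T_j \equiv 1$, so $c = 1$ and $a/c = a$. Substituting, the $j$-th term collapses to
\[
\eta_j(\theta) T_j(\xi) - a = \bigl(\eta_j(\theta) - a/c\bigr)\,T_j(\xi) = \eta_j^*(\theta)\,T_j(\xi),
\]
while every other coefficient is left untouched, so that $\eta_s^* = \eta_s$ for $s \neq j$. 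Collecting the terms gives $\sum_{s=1}^{S}\eta_s(\theta) T_s(\xi) - a = \sum_{s=1}^{S}\eta_s^*(\theta) T_s(\xi)$, and raising both sides to the power $k$ yields the claimed equality of polynomials.

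There is no real analytic difficulty here: the entire content is the algebraic folding of the constant term into the constant indeterminate. The only point demanding attention is the nondegeneracy condition $c \neq 0$, which I would state explicitly so that $a/c$ is well defined. Its purpose is exactly to show that the shift by $-a$ does not inflate the indeterminate count from $S$ to $S+1$, so that the monomial count stays at the $S$-variable level $\binom{k+S-1}{S-1}$ rather than the $(S+1)$-variable level $\binom{k+S}{S}$; this is what keeps $d_P$ controlled when the result feeds into the Taylor-approximation corollaries.
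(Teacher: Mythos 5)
Your proof is correct and follows the same route as the paper, whose proof is simply the one-line remark that the identity holds ``by substituting $\eta^*_j(\theta)$''; you merely spell out that direct substitution explicitly by folding $a = (a/c)\,T_j(\xi)$ into the $j$-th coefficient and then raising to the $k$-th power. Your explicit mention of the nondegeneracy condition $c \neq 0$ is a small but worthwhile addition the paper leaves implicit.
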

\begin{proof}
The proof is direct by substituting  $\eta^*_j(\theta)$.
\end{proof}

\begin{theorem}
Let $W(\theta, \xi) = \sum_{s=1}^S\eta_s(\theta) T_s(X)$, $S \geq 2$. If an approximation of $g(w)$ is made with $K$ Taylor terms, then Corollary~\ref{cor:poly} applies.
\end{theorem}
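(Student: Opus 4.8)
The plan is to observe that truncating the Taylor series of $g$ at degree $K$ produces, by definition, a polynomial of degree $K$, so the statement should reduce directly to Corollary~\ref{cor:poly}. Writing the expansion about some anchor point $a$ gives $g(w) \approx \sum_{k=0}^{K}\frac{g^{(k)}(a)}{k!}(w-a)^{k}$, whose coefficients $\frac{g^{(k)}(a)}{k!}$ are constants independent of the optimized parameter $\theta$. This is exactly the form $p_K(w)$ required by Corollary~\ref{cor:poly}, \emph{provided} that after substituting $w = W(\theta,\xi) = \sum_{s=1}^S \eta_s(\theta) T_s(\xi)$ the number of indeterminates stays equal to $S$ rather than growing to $S+1$.

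First I would substitute the decomposition of $W$ into each Taylor term, obtaining $(w-a)^{k} = \left(\sum_{s=1}^S \eta_s(\theta) T_s(\xi) - a\right)^{k}$. Naively the constant shift $-a$ behaves like an extra $(S+1)$-th indeterminate carrying a constant statistic, which would inflate the indeterminate count and hence replace the target $\binom{K+S}{S}-1$ by the larger $\binom{K+S+1}{S+1}-1$. The key step is to invoke Lemma~\ref{lemma:cut_size}: because $S \geq 2$ and $W$ arises from a location-scale family, one of its sufficient statistics is constant (for $\mu + \sigma\xi$ this is $T_1(\xi) \equiv 1$), so the $-a$ can be absorbed into the corresponding coefficient via $\eta_1^*(\theta) = \eta_1(\theta) - a$. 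After this reduction every term $(w-a)^{k}$ is again a polynomial in exactly $S$ indeterminates $T_1(\xi),\dots,T_S(\xi)$ with $\theta$-dependent coefficients, matching the hypothesis of Corollary~\ref{cor:poly}.

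Having rewritten the truncated expansion as a genuine degree-$K$ polynomial in $w = \sum_{s=1}^S \eta_s^*(\theta) T_s(\xi)$, I would apply Corollary~\ref{cor:poly} verbatim to conclude that a parameterization tuple exists with $d_P = \binom{K+S}{S}-1$, independent of $M$. The main obstacle is precisely the indeterminate-counting in the middle step: one must verify that the location-scale structure genuinely supplies a constant statistic so that Lemma~\ref{lemma:cut_size} applies, and that absorbing the shift $a$ alters neither the degree nor the $M$-independence already established for the monomials $w^k$ in Theorem~\ref{thm:s2}. Everything else is bookkeeping — summing the per-term parameterization tuples and noting that the coefficients remain $\theta$-dependent constants shared across all $M$ samples, exactly as in the proof of Corollary~\ref{cor:poly}.
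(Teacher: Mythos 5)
Your argument is essentially the second branch of the paper's own proof: write the truncated expansion as $\sum_{k=0}^K c_k (w-a)^k$, note that each term $\left(\sum_{s=1}^S \eta_s(\theta)T_s(\xi_i)-a\right)^k$ threatens to introduce an $(S{+}1)$-th indeterminate, and invoke Lemma~\ref{lemma:cut_size} to absorb the shift $-a$ into the coefficient attached to a constant statistic. The gap is in the sentence claiming that ``$W$ arises from a location-scale family, so one of its sufficient statistics is constant.'' The theorem does not assume a location-scale family; its hypothesis is only $W(\theta,\xi)=\sum_{s=1}^S \eta_s(\theta)T_s(\xi)$ with $S\geq 2$, and nothing forces some $T_j$ to be constant (e.g., $T_1(\xi)=\xi$, $T_2(\xi)=\xi^2$ is admissible). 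Lemma~\ref{lemma:cut_size} is explicitly conditional on the existence of such a constant statistic, so as written your proof covers only a sub-case of the stated hypothesis. The paper is careful about exactly this point: it presents the lemma-based reduction only as a second, ``in some cases'' route, and handles the general case by viewing $\left(\sum_{s}\eta_s(\theta)T_s(\xi_i)-a\right)^k$ as a polynomial in $S+1$ indeterminates (the constant playing the role of the extra one) and then observing that the $K+1$ monomials involving only powers of $a$ have no interaction with $\theta$ and can be discarded from the computation graph, yielding $d_P=\binom{K+S+1}{S+1}-(K+1)$ --- larger than $\binom{K+S}{S}-1$, but still independent of $M$, which is the substance of the claim.

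Note also that your opening observation already contains the cleanest way to close the gap without any constant-statistic assumption: re-expand each $(w-a)^k$ by the binomial theorem and collect powers of $w$, so that the truncated Taylor series becomes literally $p_K(w)=\sum_{k=0}^K b_k w^k$ with coefficients $b_k$ that are constants independent of $\theta$ (they involve only the derivatives $g^{(j)}(a)$ and the anchor $a$). This is exactly the form covered by Corollary~\ref{cor:poly} (see the paper's note following that corollary), so it applies verbatim with $d_P=\binom{K+S}{S}-1$ for arbitrary $T_s$, constant or not. Either repair --- the paper's $(S{+}1)$-indeterminate count or this re-expansion --- completes your argument; as submitted, the location-scale assumption is a genuine restriction of the theorem's scope rather than mere bookkeeping.
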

\begin{proof}
Consider Taylor approximation of $g(w)$, with proper $a$ and $K$ terms:
$$g(w) = \sum _{k=0}^K {\frac {g^{(k)}(a)}{k!}}(w-a)^{k}, \text{ where } K \text{ can be } \infty.$$
Then,
\begin{equation}
\begin{aligned}[b]
        \sum_{i=1}^M g(w_i) &=  \sum_{i=1}^M g(w_i) \\
        &=\sum_{i=1}^M \sum _{k=0}^{K} c_k (w_i-a)^k\\
        &=\sum _{k=0}^{K}\sum_{i=1}^M c_k(w_i-a)^{k}\\
        &=\sum _{k=0}^{K}\sum_{i=1}^M c_k\left(\sum_{s=1}^S\eta_s(\theta) T_s(\xi_i) - a\right)^{k}
        \label{form:tayl_polyn}
\end{aligned}
\end{equation}
From this point there are 2 ways to apply Corollary~\ref{cor:poly}:
\paragraph{1. Polynomial of order $K$ with $S+1$ indeterminates}~\\
Eq.~\eqref{form:tayl_polyn} can be considered as a polynomial of order $K$ with $S+1$ indeterminates. Then according to Corollary~\ref{cor:poly}, for corresponding parametrization tuple $P$, $d_P= \frac{(K+1)\binom{K+S+1}{S}}{S+1} - 1.$ However, since $a$ in Eq.~\ref{form:tayl_polyn} is constant, then $p_K(w-a)$ has $K$ terms depending just on $a$ and can be disregarded in the computation graph, since there is no interaction with parameters. This results in 
$$
d_P= \frac{(K+1)\binom{K+S+1}{S}}{S+1} - (K+1)
$$
\paragraph{2. Polynomial of order K with S indeterminates}~\\
In some cases Eq.~\eqref{form:tayl_polyn} can be considered as polynomial of order $K$ with $S$ new indeterminates, following Lemma~\ref{lemma:cut_size}. Then according to Corollary~\ref{cor:poly}, in addition to a node responsible for reparameterization of $\eta^*(\theta)$, we get
$$
d_P= \frac{(K+1)\binom{K+S}{S-1}}{S} +1
$$
which reduces to the form in Eq.~\eqref{form:dp_polyn}.
\end{proof}

\end{document}